\newtheorem{definition}{Definition}
\newtheorem{lemma}{Lemma}
\newtheorem{proposition}{Proposition}
\newtheorem{corollary}{Corollary}
\newtheorem{theorem}{Theorem}
\newtheorem{remark}{Remark}
\newcommand{\E}{\mathbb{E}}
\newcommand*\samethanks[1][\value{footnote}]{\footnotemark[#1]}
\title{Group-Fair Online Allocation in Continuous Time}
\author{%
  Semih~Cayci\thanks{Department of Electrical and Computer Engineering, The Ohio State University,
  Columbus, OH 43210}
  \\\texttt{cayci.1@osu.edu}
  \And
  Swati Gupta\thanks{H. Milton Stewart School of Industrial and Systems Engineering, Georgia Institute of Technology,
  Atlanta, GA 30332}\\
  \texttt{swatig@gatech.edu}
  \And
   Atilla Eryilmaz\samethanks[1]\\
  \texttt{eryilmaz.2@osu.edu} \\
}
\begin{document}

\maketitle

\begin{abstract}
The theory of discrete-time online learning has been successfully applied in many problems that involve sequential decision-making under uncertainty. However, in many applications including contractual hiring in online freelancing platforms and server allocation in cloud computing systems, the outcome of each action is observed only after a random and action-dependent time. Furthermore, as a consequence of certain ethical and economic concerns, the controller may impose deadlines on the completion of each task, and require fairness across different groups in the allocation of total time budget $B$. In order to address these applications, we consider continuous-time online learning problem with fairness considerations, and present a novel framework based on continuous-time utility maximization. We show that this formulation recovers reward-maximizing, max-min fair and proportionally fair allocation rules across different groups as special cases. We characterize the optimal offline policy, which allocates the total time between different actions in an optimally fair way (as defined by the utility function), and impose deadlines to maximize time-efficiency. In the absence of any statistical knowledge, we propose a novel online learning algorithm based on dual ascent optimization for time averages, and prove that it achieves $\tilde{O}(B^{-1/2})$ regret bound.




  
\end{abstract}
\section{Introduction}\label{sec:intro}
\vspace{-0.1cm}
With the prevalence of automated decision methods and machine learning methods, it is important to analyze the impact of learning and evaluate models not only with respect to traditional objectives such as reward or model accuracy, but also to account for the impact on individuals that interact with the system. Indeed, there are many studies highlighting algorithmic discrimination due to problems in the machine learning pipeline: imbalance in data \cite{Zhao2017}, learnt representations \cite{Bolukbasi2016, Caliskan2017}, choice of model proxies \cite{Lum2016}, demographic group-dependent difference in error rates of the learned models \cite{Washington2018, Kleinberg2018, Chouldechova2017}, to name a few. With rising ethical and legal concerns, addressing such issues has become urgent, specially as these impact critical societal decisions involving job opportunities and hiring. In 2014, it was estimated that 25\% of the total workforce in the US was involved in some form of freelancing, and this number was predicted to grow to 40\% by 2020 \cite{laumeister2014next}. In reality, this percentage might be much higher, due to COVID-19 restrictions leading to increased work-from-home and changes in job opportunities \cite{wsjarticle, upwork}. In online platforms however, there has been a strong evidence of bias observed in number of user reviews and user ratings\footnote{The mean (median)
normalized rating score for White workers was  0.98 (1), while it is 0.97 (1) for Black workers on {\sc TaskRabbit}. The mean (median) rating of White workers was found to be 3.3 (4.8), 3.0 (4.6) for Black workers, 3.3 (4.8) for Asian workers, 3.6 (4.8) for workers with a picture that does not depict a person, and
1.7 (0.0) for workers with no image on {\sc Fiverr} \cite{Hannak2017}.} on completing jobs with significant correlations with race, gender, location of work and length of profiles\footnote{Mean (median) number of reviews: for women 33 (11), 59 (15) for men on {\sc TaskRabbit}. Mean (median) number of reviews: for Black
workers was found to be 65 (4), 104 (6) for White workers, 101 (8) for Asian workers, 94 (10) for non-human pictures and 18 (0) for users with no image on {\sc Fiverr} \cite{Hannak2017}.} \cite{Hannak2017}. Motivated by these problems in online contractual hiring, we study a theoretical framework for sequential resource allocation to workers, where the controller (decision maker) can enforce deadlines for each task's completion. Our key contribution is to quantify impact of reward maximization in terms of equality of opportunity for jobs and develop algorithms that can achieve a meaningful trade-off between these via online utility maximization. The challenge is to maximize total reward within a given time budget, while accounting for random completion times by workers from different groups and fairness in allocation. 

Formally, we consider $K$ groups of individuals who can be hired sequentially for each task, i.e., at any point, exactly one individual can be hired. If an individual from group $k\in[K]$ is chosen for the $n$-th task and given a contractual deadline $t$ by the controller, he/she generates a random \textit{reward} of $R_{k,n}$ if the task is completed by (random) time $X_{k,n}$ within deadline $t$. If the task is not completed by the deadline, the reward obtained by the controller is zero and the time until the deadline is wasted (i.e., yields 0 reward for the controller). Completion times and reward distributions are assumed  group-dependent and i.i.d. across tasks. The objective of the controller is to maximize utility (trade-off between total reward and fair allocation) in the offline (known distributions) and online settings (unknown distributions) under a budget constraint on time. As we will show in this paper, controlled deadlines set are essential for optimal time-efficiency under the budget constraint.


\begin{figure}
    \centering
    \includegraphics[width=0.9\textwidth]{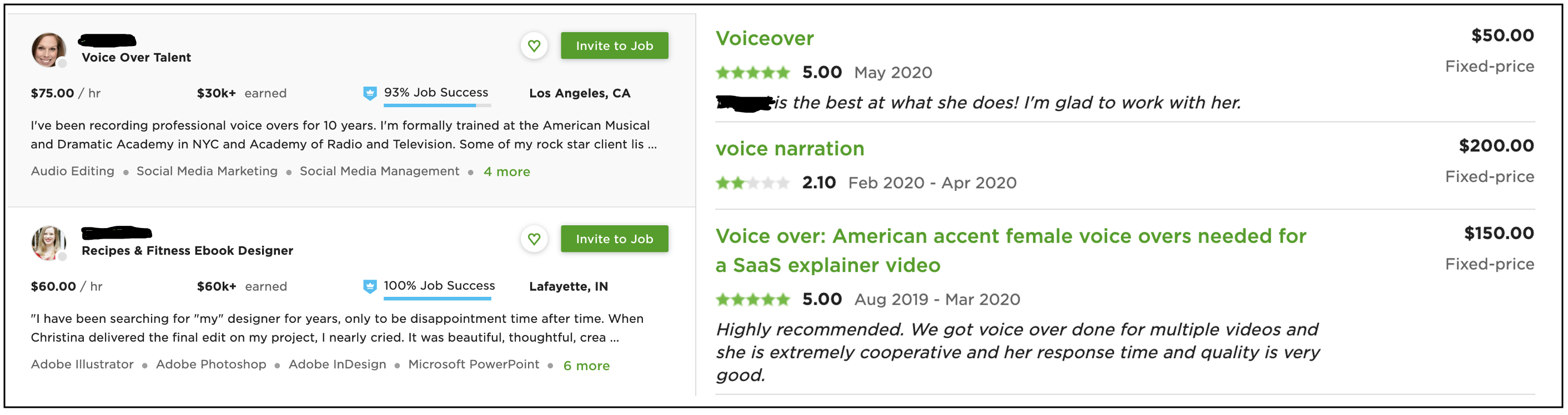}
    \caption{\footnotesize Freelancer profiles on {\sc Upwork} with their past performance and corresponding reviews for ``fixed-price" contracts. Contractors can access these profiles and allocate fixed-timed contracts with deadlines.}
    \label{fig:profiles}
    \vspace{-0.4cm}
\end{figure}

The ethical problems we are concerned with involve the rate of jobs allocated to different demographic groups and the deadlines imposed on these under reward maximization regimes \cite{Hannak2017}. Our sequential framework would also apply to other settings, for e.g., comparative clinical trials with varying follow-up durations as well as to server allocation in cloud computing where jobs are drawn from different application groups and must commit computational resources until a specific amount of time due to service level agreements (Section \ref{sec:model}). We will often focus on the first application involving online contractual hiring, since fairness concerns are most naturally motivated in this domain.





Given a time budget constraint $B$ and the diverse random nature of completion time and reward pairs, the main question we consider is how to decide distribution of tasks and deadlines between different groups of people. Two potential extreme allocations are:
(i) {\it Reward-maximizing task allocation}: The controller assigns all tasks to the most rewarding group to maximize the total reward within the given time budget. The other groups do not get any chance to receive tasks.
(ii) {\it Proportional task allocation}: The controller completely ignores the reward distributions, and attempts to give equal time share to each group. In other words, each group receives a fraction of the tasks inversely proportional to their mean completion times. There is clearly a trade-off between the reward maximization and equal time-share considerations in continuous-time sequential task allocation, and well-chosen utility functions \cite{ srikant2013communication} can be helpful in modeling this in a unified way. In this paper, we consider a very general class of utility functions, which recovers broadly used fairness criteria such as proportional fairness, max-min fairness, reward maximization among many others \cite{bertsimas2012efficiency, jain2007eisenberg, srikant2013communication}. The controller can determine her priorities in terms of notions of fairness and model the task allocation problem by choosing the utility function accordingly.
The main contributions of this paper are summarized as follows:

\begin{enumerate}[leftmargin=*]
 \setlength{\itemsep}{-0.5 pt}
\item \textbf{Incorporation of random completion time dynamics and fairness in allocation:} In discrete-time online learning models, each action is assumed to take a unit completion time, thus the random and diverse nature of task completion times, as required in many fundamental real-life applications, is ignored. In this work, we incorporate this aspect and develop a sequential learning framework in continuous time using tools from the theory of renewal processes and stochastic control. We show how controlled deadlines improve the time-efficiency in continuous-time decision processes. Moreover, this is the first work, to the best of our knowledge, that analyzes fair distribution policies in online contractual hiring. 
\item \textbf{Characterization of Approximately Optimal Offline Policies:} As a consequence of the random and controlled task completion times, the optimal policy for fair resource allocation is PSPACE-hard akin to unbounded stochastic knapsack problems. For tractability in design and analysis, we propose an approximation to the optimal offline policy based on Lagrange duality and renewal theory, and prove that it is asymptotically optimal. These approximate policies allocate tasks independently with respect to a fixed probability distribution.

\item \textbf{Online learning for utility maximization:} For utility maximization in an online setting with full information feedback, we develop a novel and low-computational-complexity online learning algorithm based on dynamic stochastic optimization methods for time averages, and show that it achieves $\tilde{O}(B^{-1/2})$ regret for a time budget $B$. The optimal offline control policy in this paper is time-dependent, randomized and attempts to optimize time averages unlike the reward maximization problems in discrete-time problems. Despite these, the online learning algorithm we developed adapts to the randomness in completion time-reward pairs, and achieves optimal performance with vanishing regret at a fast rate.


\end{enumerate}

\textbf{Related Work:} The problem of fair resource allocation via utility maximization has been widely considered in economics and network management \cite{palomar2006tutorial, eryilmaz2007fair, kushner2004convergence,kahneman2006anomalies}. The utility maximization approach to fair resource allocation in these papers predominantly deals with discrete-time systems, therefore the randomness and diversity in task completion times is completely ignored. Furthermore, these works either assume perfect knowledge of rewards and completion times prior to decision-making, or they assume the knowledge of statistics, therefore they do not incorporate online learning. The only continuous-time utility maximization approach to fair resource allocation is \cite{neely2010dynamic}, which assumes the knowledge of first-order statistics. Our work utilizes the offline optimization results in \cite{neely2010dynamic} for developing online learning algorithms. 

Online learning under budget constraints has been considered under the scope of bandits with knapsacks \cite{badanidiyuru2018bandits, tran2012knapsack, slivkins2019introduction}. In the classical bandits with knapsacks model, the objective is to maximize expected total reward under knapsack constraints in a stochastic setting. In \cite{cayci2019learning}, an interrupt mechanism is employed to incorporate the continuous-time dynamics into the budget-constrained online learning model. Note that these works focus solely on reward maximization, therefore do not address the fair resource allocation problem. The bandits with knapsacks setting was extended to concave rewards and convex constraints in \cite{agrawal2014bandits}, which assumes bounded cost and reward, and the deadline mechanism is not involved in decision-making, thus optimal time-efficiency in continuous time is not achieved. Our paper deviates from this line of work as it proposes a versatile and comprehensive framework for fairness, and incorporates continuous-time dynamics into the decision-making for time-efficiency. We include an extended discussion of related work in Appendix \ref{app:related-work}.

\section{Online Learning Framework for Group Fairness}\label{sec:model}
\vspace{-0.1cm}

We consider the sequential and fair allocation of tasks to individuals from different groups, whose completion times and rewards randomly vary. This goal differs significantly from traditional online learning models that aim to maximize the expected total reward with unit completion times. Under this traditional setting, the controller's goal is to find and persistently select the reward-maximizing groups to allocate its tasks. As a consequence, the reward-maximization objective leads to the starvation of suboptimal groups, which causes unfairness amongst the groups with different statistical characteristics. Next, we provide a few motivating examples  with group fairness requirements:\vskip -0.5cm


\begin{itemize}[leftmargin=*]
 \setlength{\itemsep}{-0.5 pt} \item \textbf{Contractual Hiring in Online Freelancing Platforms:} Online freelancing sites like {\sc Upwork} host contractual workers (freelancers) that can be hired by ``contractors" who require specific tasks to be completed. Each freelancer has a profile and performance on past tasks that can be learned by the contractors via ratings and reviews (see, typical profile in Figure \ref{fig:profiles}). Fixed-timed contracts are popular on {\sc Upwork}, wherein contractors enforce a deadline by which the task must be completed otherwise the contract is terminated (i.e., there is no payment). Contractors can browse profiles and post a job to a selected set of freelancers with a deadline. However, there is a large literature documenting bias in online rating systems, which in turn impact job opportunities disparately \cite{Hannak2017, Rosenblat2016, Chakraborty2017}, thus making it critical to develop theory of online learning for such settings. 
    
    
    
    
   \item \textbf{Server Allocation in Cloud Computing:} An important application of our framework is online learning for fair resource allocation in cloud computing systems. In a very basic setting, a single server is sequentially allocated to tasks from one of $K$ user groups, which exhibit similar execution time statistics and priority levels within each group. In many practical scenarios, the execution time of a task is unknown at the time decision \cite{harchol2000task,motwani1994nonclairvoyant}, and exhibits a power-law behavior \cite{harchol1997exploiting}, which necessitates a deadline mechanism for optimal time-efficiency \cite{cayci2019learning}. In this setting, the objective of the controller is to allocate the server in an optimally fair way across the groups in a given time interval $[0,B]$, depending on the completion time statistics and priority levels. Our work proposes a versatile framework to model fairness for this problem based on the concept of continuous-time utility maximization, and develops online learning algorithms to achieve the optimal performance with low regret in the absence of any statistical knowledge.
\end{itemize}

More examples can be found in other domains, including multi-user wireless communication over fading channels (e.g., see \cite{cayci2019learning}), comparative clinical trials with optimal follow-up duration (e.g., see \cite{kim1990study,thall1988two}), whereby the goal is to fairly share the limited resources between groups of users. 


Motivated by these examples, next we introduce an online learning framework that expands the traditional setting substantially to incorporate group fairness characteristics into its formulation. 
Suppose that there are $K \geq 1$ groups of individuals that are available for serving tasks with different (and unknown) statistics. Specifically, if an individual from group $k\in[K]=\{1,2,\ldots,K\}$ is chosen for the $n$-th task, he/she takes $X_{k,n}$ units of \textit{completion time} for successful completion, and a \textit{reward} of $R_{k,n}(t)=\overline{R}_{k,n}\mathbb{I}\{X_{k,n}\leq t\}$ is obtained $t$ time units after the initiation where $\overline{R}_{k,n}$ is a positive random variable and $R_{k,n}(t)\in[0,R_{max}(t)]$ a.s. for some finite constant $R_{max}(t)>0$. Thus, the random reward $\overline{R}_{k,n}$ is gathered only if the task is completed. For example, in the server allocation application, a group-$k$ task of random size $\overline{R}_{k,n}$ yields a reward (throughput) $R_{k,n}(t)=\overline{R}_{k,n}\mathbb{I}\{X_{k,n}\leq t\}$ only upon successful completion. We assume that $(X_{k,n}, R_{k,n}(t))$ is independent and identically distributed (iid) over $n$, and independent across different groups $k$. Note that the completion time $X_{k,n}$ and reward $\overline{R}_{k,n}$ can be correlated, for example, in the server allocation example, the completion time $X_{k,n}$ and size $\overline{R}_{k,n}$ of a task are positively correlated \cite{jelenkovic2013characterizing}. We assume that each task has a positive completion time, i.e., $X_{k,n}>0$ almost surely for all $k,n$.

Before the $n$-th task begins, the controller makes two decisions: the group $G_n\in [K]$ of the individual that will be assigned the task, and a deadline $T_n\in\mathbb{T}$, where $\mathbb{T}\subset \mathbb{R}_+$ is the decision set. If the task is not completed by the selected deadline, the service is interrupted without collecting any reward.  In many applications, the deadlines are chosen within a discrete set (e.g., days/months in contractual hiring or time-slots in server allocation), thus we assume a finite decision set $\mathbb{T}=\{t_1,t_2,\ldots,t_L\}$ with $t_l<\infty$ for all $l$ in this paper. The sequential task allocation continues until a given time budget $B>0$ is exceeded, therefore, the completion time of a task is as important as the reward.

To describe this process mathematically, let  $\mathcal{H}_{k,n-1}$ denote the available feedback for group $k$, and $\mathcal{H}_{n-1}=\cup_{k\in[K]}\mathcal{H}_{k,n-1}$ denote the history before making a decision for task $n$. For a given time budget $B>0$, a \textit{causal policy} $\pi = \{\pi_1,\pi_2,\ldots\}$ sequentially makes two decisions $\pi_n = (G_n,T_n)\in [K]\times\mathbb{T}$ for each task $n$ based on the history $\mathcal{H}_{k-1}$, where $G_n$ is the chosen group and $T_n$ is the assigned deadline. Under a policy $\pi$, the number of initiated tasks is the following \textit{first-passage time}:
\begin{equation}
        N^\pi(B) = \inf\Big\{n:\sum_{i=1}^n\min\{X_{G_i,i}, T_i\} > B\Big\},  
\end{equation}
\noindent which is a random and controlled stopping time. Moreover, the \textit{reward rate} of any user type $k$ is: 
\begin{equation}
    \overline{r}^\pi_k(B) = \mathbb{E}\Big[ \frac{1}{B}\sum_{n=1}^{N_\pi(B)}\mathbb{I}\{G_n=k\}R_{k,n}(T_n)\Big], \text{ under policy } \pi. 
    \label{eqn:rew-rate}
\end{equation}
If $R_{k,n}(t) = \mathbb{I}\{X_{k,n}\leq t\}$, i.e., each task completion yields a unit reward, then $\overline{r}_{k}^\pi(B)$ simply denotes the task completion rate (i.e., throughput) of group $k$ individuals in the time interval $[0,B]$.

Note that designing strategies that aim to maximize the total reward rate in \eqref{eqn:rew-rate} will lead to the persistent selection of the group with the highest reward rate at the cost of starvation of all the rest (see \cite{cayci2019learning}).
In order to address group fairness considerations, we propose a continuous-time online learning framework based on the utility maximization concept that is used effectively in the fair resource allocation domain (e.g., see \cite{eryilmaz2007fair}). Specifically, for a given continuously-differentiable, concave and monotonically increasing utility function $U_k:\mathbb{R}\rightarrow\mathbb{R}$, we let the utility of group $k$ under a policy $\pi$ be given by $U_k\big(\overline{r}_k^\pi(B)\big)$. Then, the total utility under a policy $\pi$ is defined as: $$U^\pi(B) = \sum_{k=1}^KU_k\big(\overline{r}^\pi_k(B)\big), \text{ for time interval } [0,B].$$ 
The optimum utility over a class of policies $\Pi$, and the regret for a given $\pi \in \Pi$ are, respectively:
\begin{equation}
    {\tt OPT}_\Pi(B) = \max_{\pi\in\Pi}~\sum_{k=1}^KU_k\big(\overline{r}^\pi_k(B)\big) \quad \text{  and  }
    \quad {\tt REG}_\Pi^\pi(B)={\tt OPT}_\Pi(B)-U^\pi(B), \text{  for} ~B>0.
\label{eqn:objective}
\end{equation}

Note that, due to the monotonically increasing and concave nature of utility functions, allocating the tasks always to the most rewarding group is not a good choice, because the same amount of time could yield a higher utility for another group because of the diminishing return property of concave functions. A particularly important set of utility functions is captured by the $\alpha$-fair class, given next. 

\begin{definition}[$\alpha$-Fair Allocation] \label{def:alphaFair}
For any given $\alpha > 0$ and weight $w_k>0$, let $U_k(x) = w_k\frac{x^{1-\alpha}}{1-\alpha},$ for all $k$. Resource allocation by using these utility functions is called $\alpha$-fair resource allocation.
\end{definition}
This class is attractive since it includes as special cases proportional fairness, minimum potential delay fairness, reward maximization and max-min fairness \cite{srikant2013communication}. 



\section{Approximation of the Optimal Offline Policy}
\label{sec:approximation}
\vspace{-0.1cm}

Note that a simpler version of the sequential maximization problem in \eqref{eqn:objective} with linear utility functions over all causal policies is called an unbounded knapsack problem, and it is PSPACE-hard even in the case of known statistics \cite{papadimitriou1999complexity, badanidiyuru2018bandits}. Therefore, the optimal causal policy for the problem in \eqref{eqn:objective} has a very high computational complexity even in the offline setting, which makes it intractable for online learning. For tractability in design and analysis, we consider a class of simple policies that allocate tasks in an i.i.d. randomized way according to a fixed probability distribution over groups, and show its efficiency in this section.

\begin{definition}[Stationary Randomized Policies] \label{def:SRP}
Let $P$ be a fixed probability distribution over $[K]\times \mathbb{T}$. A stationary randomized policy (SRP) $\pi=\pi(P)$ makes a randomized decision independently according to $P$ for every task until the time budget $B$ is depleted. In other words, under the SRP $\pi(P)$, we have $\mathbb{P}\big(\pi_n = (k,t)\big) = P(k, t),~\forall n \leq N_\pi(B),$ for all $(k,t)\in[K]\times\mathbb{T}$. We denote the class of all stationary randomized policies as $\Pi_S$.
\end{definition}

\begin{proposition}[Asymptotic optimality of SRP]
    There exists a probability distribution $P^\star$ such that the stationary randomized policy $\pi(P^\star)$ is asymptotically optimal over all causal policies as $B\rightarrow\infty$.
    \label{prop:asy-optimality}
\end{proposition}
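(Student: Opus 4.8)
The plan is to reduce the infinite-horizon stochastic control problem to a finite-dimensional concave program over the polytope of achievable long-run reward-rate vectors: renewal theory handles the stationary randomized policies, while a Wald/optional-stopping argument bounds arbitrary causal policies by the same program. First I would isolate the two per-decision quantities that govern the dynamics. For each group $k$ and deadline $t\in\mathbb{T}$, set the expected reward and expected time consumed per task of type $(k,t)$,
\begin{equation}
  r_{k,t}=\mathbb{E}\big[\overline{R}_{k,1}\mathbb{I}\{X_{k,1}\le t\}\big],\qquad
  d_{k,t}=\mathbb{E}\big[\min\{X_{k,1},t\}\big]>0,
\end{equation}
where $d_{k,t}>0$ because $X_{k,1}>0$ a.s. and $t>0$; let $d_{\min}=\min_{k,t}d_{k,t}>0$ and $t_{\max}=\max_l t_l<\infty$. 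Under an SRP $\pi(P)$ the increments $\min\{X_{G_n,n},T_n\}$ and the rewards $R_{G_n,n}(T_n)$ are i.i.d., so the partial sums form an ordinary renewal process and the renewal-reward theorem (in expectation, valid since rewards are bounded and increments are positive with mean $\ge d_{\min}$) gives the long-run reward rate
\begin{equation}
  \lim_{B\to\infty}\overline{r}^{\,\pi(P)}_k(B)
  =\frac{\sum_{t}P(k,t)\,r_{k,t}}{\sum_{k',t'}P(k',t')\,d_{k',t'}}
  =\sum_{t}\rho_{k,t}\,\frac{r_{k,t}}{d_{k,t}},
\end{equation}
where $\rho_{k,t}=P(k,t)d_{k,t}/\sum_{k',t'}P(k',t')d_{k',t'}$ is the long-run fraction of time spent on $(k,t)$.

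As $P$ ranges over all distributions, $\rho$ sweeps the whole simplex $\Delta$ on $[K]\times\mathbb{T}$, so the SRP-achievable region is the polytope $\mathcal{R}_S=\{(\sum_t\rho_{k,t}r_{k,t}/d_{k,t})_k:\rho\in\Delta\}$. I would then define the deterministic concave program
\begin{equation}
  V^\star=\max_{\rho\in\Delta}\ \sum_{k=1}^K U_k\Big(\sum_{t}\rho_{k,t}\tfrac{r_{k,t}}{d_{k,t}}\Big),
\end{equation}
fix a maximizer $\rho^\star$, and take $P^\star(k,t)\propto \rho^\star_{k,t}/d_{k,t}$ as the candidate distribution. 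The core step is the matching upper bound on every causal policy. Fixing $\pi$ and $B$, let $N_{k,t}(B)=\sum_{i\le N^\pi(B)}\mathbb{I}\{G_i=k,T_i=t\}$. Since a causal decision $(G_i,T_i)$ is made before $X_{\cdot,i}$ is drawn, the process $\sum_{i\le n}\mathbb{I}\{G_i=k,T_i=t\}\big(R_{k,i}(t)-r_{k,t}\big)$ is a martingale with bounded increments, and optional stopping at $N^\pi(B)$ yields $\mathbb{E}[\sum_{i\le N^\pi(B)}\mathbb{I}\{G_i=k,T_i=t\}R_{k,i}(t)]=r_{k,t}\,\mathbb{E}[N_{k,t}(B)]$, and likewise $d_{k,t}\mathbb{E}[N_{k,t}(B)]$ for the time spent on $(k,t)$. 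Because the budget is overshot by at most one task of length $\le t_{\max}$, summing over $(k,t)$ gives $\sum_{k,t}d_{k,t}\mathbb{E}[N_{k,t}(B)]\in[B,\,B+t_{\max}]$. Writing $\tilde\rho_{k,t}=d_{k,t}\mathbb{E}[N_{k,t}(B)]/B$, this says $\sum_{k,t}\tilde\rho_{k,t}\le 1+t_{\max}/B$ and $\overline{r}^{\,\pi}_k(B)=\sum_t\tilde\rho_{k,t}r_{k,t}/d_{k,t}$, so after rescaling $\tilde\rho$ to unit mass and using that each $U_k$ is increasing, $U^\pi(B)\le \max_{\rho\in\Delta}\sum_k U_k\big((1+t_{\max}/B)\sum_t\rho_{k,t}r_{k,t}/d_{k,t}\big)$; by continuity of the $U_k$ this tends to $V^\star$, giving $\limsup_B {\tt OPT}(B)\le V^\star$.

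Finally, the renewal-reward identity applied to $\pi(P^\star)$ gives $\lim_B U^{\pi(P^\star)}(B)=\sum_k U_k(\sum_t\rho^\star_{k,t}r_{k,t}/d_{k,t})=V^\star$, and since $U^{\pi(P^\star)}(B)\le {\tt OPT}(B)\le V^\star+o(1)$, the regret ${\tt OPT}(B)-U^{\pi(P^\star)}(B)\to 0$, which is the claimed asymptotic optimality. The main obstacle is making the upper-bound step rigorous for \emph{adaptive} policies: one must first establish $\mathbb{E}[N^\pi(B)]<\infty$ (via truncation and optional stopping, using $\mathbb{E}[\min\{X,t\}]\ge d_{\min}>0$) to license Wald's identity, then control the boundary overshoot and, most delicately, argue that the \emph{expected} time-averages converge to the renewal limits uniformly rather than merely almost surely, which is exactly where boundedness of the rewards and the strictly positive completion times are used. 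A secondary subtlety is that $\alpha$-fair utilities with $\alpha>1$ diverge at $0$, so one must check that $\rho^\star$ assigns positive rate to every group (forced since $U_k\to-\infty$ at $0$) to keep the continuity arguments valid.
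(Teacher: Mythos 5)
Your proposal is correct, and it reaches the result by a genuinely different route from the paper. The paper's proof (Appendix B) never analyzes causal policies directly via martingales: it first proves a reduction lemma (Lemma \ref{lemma:jensen}) asserting $\lim_{B\to\infty}{\tt OPT}_{\Pi_A}(B)=U^*$, where $U^*$ is the value of a constrained optimization of time averages built from auxiliary variables $\gamma_{m,n}$ and the processes $Z_{\pi_n,n}$, $Y_{\pi_n,m,n}$, with the key step delegated to an ``extended Jensen's inequality'' (Lemma 7.6 of \cite{neely2010stochastic}); it then cites Lemma 1 of \cite{neely2010dynamic} for the existence of an SRP attaining $U^*$, and closes the argument with Proposition \ref{prop:rand-policy}. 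You replace both borrowed ingredients with self-contained ones: the renewal--reward theorem (essentially re-deriving the asymptotic form of Proposition \ref{prop:rand-policy}) for achievability, and a Wald/optional-stopping identity for the converse, which shows that the expected-rate vector $\big(\overline{r}_k^\pi(B)\big)_k$ of \emph{any} causal policy lies in the inflated polytope $(1+t_{\max}/B)\,\mathcal{R}_S$, a linear image of the simplex. The crux that makes this linearization legitimate---and it deserves to be stated explicitly---is that the objective applies $U_k$ to the \emph{expected} rate rather than averaging utilities of realized rates, so adaptivity cannot enlarge the set of achievable rate vectors beyond an $O(1/B)$ boundary effect; indeed your converse uses only monotonicity and continuity of $U_k$, not concavity. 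As for what each approach buys: yours is more elementary, citation-free, and quantitative on the converse side; the paper's auxiliary-variable formulation is deliberate scaffolding that is reused in the proof of Theorem \ref{thm:regret}, where the same constrained time-average program and the dual variables $Q_{k,n}$ drive the {\tt OLUM} analysis, so its longer detour pays off later. The technical debts you flag---establishing $\mathbb{E}[N^\pi(B)]<\infty$ before invoking optional stopping (your sketch gives $\mathbb{E}[N^\pi(B)]\le (B+t_{\max})/d_{\min}$), and positivity of the optimal rates when $U_k$ diverges at $0$---are exactly the right ones and are resolved as you indicate.
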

The proof of Proposition \ref{prop:asy-optimality} can be found in Appendix \ref{app:rand-policy}. In the following, we characterize the total utility under $\pi(P)$ by providing tight bounds.

\begin{proposition}
Let $P$ be any given probability distribution over $[K]\times \mathbb{T}$. Then, the reward per unit time for group $k$ under the stationary randomized policy $\pi(P)$ is as follows: $$\rho_k(P) = \frac{\sum_{t\in\mathbb{T}}P(k,t)\mathbb{E}[R_{k,1}(t)]}{\sum_{(i,t)\in[K]\times\mathbb{T}}P(i,t)\mathbb{E}[\min\{X_{i,1}, t\}]}, \forall k \in [K].$$ Consequently, the total utility under the stationary randomized policy $\pi(P)$ is bounded as follows: $$\sum_{k\in[K]}U_k\Big(\rho_k(P)\Big) \leq \sum_{k\in[K]}U_k\big(\overline{r}_k^{\pi(P)}(B)\big) \leq \sum_{k\in[K]}U_k\Big(\rho_k(P)\Big) + O\Big(\frac{1}{B}\Big).$$
\label{prop:rand-policy}
\end{proposition}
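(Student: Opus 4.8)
The plan is to exploit the renewal structure that the i.i.d.\ decisions of the SRP $\pi(P)$ impose on the task sequence. Define the \emph{effective service time} of task $n$ as $S_n = \min\{X_{G_n,n}, T_n\}$ and the group-$k$ reward of task $n$ as $Y_{k,n} = \mathbb{I}\{G_n=k\}R_{k,n}(T_n)$. Because $(G_n,T_n)$ is drawn from $P$ independently of everything else and $(X_{k,n},\overline{R}_{k,n})$ is i.i.d.\ over $n$, the vectors $(S_n, Y_{1,n},\dots,Y_{K,n})$ are i.i.d.\ over $n$, and $N_\pi(B) = \inf\{n:\sum_{i=1}^n S_i > B\}$ is the first-passage counting variable of the renewal process generated by $\{S_n\}$. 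Conditioning on the decision and using its independence from the task variables gives $\mathbb{E}[S_1] = \sum_{(i,t)}P(i,t)\mathbb{E}[\min\{X_{i,1},t\}] =: \mu$ and $\mathbb{E}[Y_{k,1}] = \sum_{t}P(k,t)\mathbb{E}[R_{k,1}(t)] =: \nu_k$, so that $\rho_k(P) = \nu_k/\mu$ is exactly the claimed expression and is the long-run reward rate. Since $X_{k,1}>0$ a.s.\ and $\min_l t_l>0$, we have $\mu>0$, hence elementary renewal theory gives $\mathbb{E}[N_\pi(B)]<\infty$.

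Next I would write $B\,\overline{r}_k^{\pi(P)}(B) = \mathbb{E}\big[\sum_{n=1}^{N_\pi(B)}Y_{k,n}\big]$ and evaluate it by a generalized Wald identity. The key observation---and the step I expect to require the most care---is that although $Y_{k,n}$ is correlated with $S_n$ \emph{within the same task}, the event $\{N_\pi(B)\ge n\} = \{\sum_{i=1}^{n-1}S_i\le B\}$ is measurable with respect to $\sigma(S_1,\dots,S_{n-1})$ only, hence is independent of $Y_{k,n}$. Therefore $\mathbb{E}[Y_{k,n}\mathbb{I}\{N_\pi(B)\ge n\}] = \nu_k\,\mathbb{P}(N_\pi(B)\ge n)$, and summing over $n$ (justified by nonnegativity of rewards via Tonelli) yields $\mathbb{E}\big[\sum_{n=1}^{N_\pi(B)}Y_{k,n}\big] = \nu_k\,\mathbb{E}[N_\pi(B)]$, so $\overline{r}_k^{\pi(P)}(B) = \nu_k\,\mathbb{E}[N_\pi(B)]/B$. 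It then remains to sandwich $\mathbb{E}[N_\pi(B)]$. Using the fact that $S_n\le t_{\max}:=\max_l t_l$ a.s.\ (crucially finite because $\mathbb{T}$ is finite), the definition of the first-passage time gives $B < \sum_{i=1}^{N_\pi(B)}S_i \le B + t_{\max}$; taking expectations with Wald's identity $\mathbb{E}[\sum_{i=1}^{N_\pi(B)}S_i] = \mu\,\mathbb{E}[N_\pi(B)]$ produces $B/\mu < \mathbb{E}[N_\pi(B)] \le (B+t_{\max})/\mu$. Dividing by $B$ and multiplying by $\nu_k$ gives the two-sided rate bound $\rho_k(P) \le \overline{r}_k^{\pi(P)}(B) \le \rho_k(P)\,(1+t_{\max}/B) = \rho_k(P)+O(1/B)$.

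Finally I would transfer these rate bounds through the utilities. Since each $U_k$ is monotonically increasing, the left inequality $\rho_k(P)\le \overline{r}_k^{\pi(P)}(B)$ immediately yields $U_k(\rho_k(P))\le U_k(\overline{r}_k^{\pi(P)}(B))$, and summing over $k$ gives the lower bound. For the upper bound, assuming $\rho_k(P)>0$ so that $U_k$ is differentiable at $\rho_k(P)$ (which is needed anyway for the utilities---e.g.\ $\alpha$-fair ones---to be finite), concavity gives the tangent-line inequality $U_k(\overline{r}_k^{\pi(P)}(B)) \le U_k(\rho_k(P)) + U_k'(\rho_k(P))\,(\overline{r}_k^{\pi(P)}(B)-\rho_k(P))$; bounding the increment by $O(1/B)$ from the previous paragraph and summing the finitely many terms over $k\in[K]$ yields the claimed $O(1/B)$ upper bound. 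The constant absorbed into $O(1/B)$ depends on $P$, $t_{\max}$, and $\{U_k'(\rho_k(P))\}$, all of which are finite for the fixed distribution $P$.
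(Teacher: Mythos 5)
Your proof is correct, and it reaches the paper's bounds by a genuinely different route. The paper fixes each decision pair $(k,t)$ and treats the tasks receiving that decision as a regenerative process: it computes the expected length of a regenerative cycle by conditioning on the geometric number of tasks between consecutive $(k,t)$ decisions, extracts the per-pair reward rate from the renewal-reward theorem, and then controls the finite-$B$ deviation via Lorden's inequality on the overshoot (which rests on finite variances of the bounded increments). You instead work directly at the per-task level: under an SRP the vectors $(S_n, Y_{1,n},\dots,Y_{K,n})$ are i.i.d., so Wald's identity applies both to the service times and to the rewards --- your observation that $\{N_\pi(B)\ge n\}$ is determined by $S_1,\dots,S_{n-1}$ alone, hence independent of $Y_{k,n}$ even though $Y_{k,n}$ and $S_n$ are correlated within a task, is exactly the hypothesis the generalized Wald identity needs --- and the overshoot is bounded deterministically by $S_n \le t_{\max}$ thanks to the finite deadline set. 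This buys you two things: you bypass the regenerative-cycle bookkeeping entirely (the paper's computation of the mean cycle length $\mathbb{E}[Y]$ is the most delicate and error-prone part of its argument), and you obtain an explicit constant $\rho_k(P)\, t_{\max}$ in the $O(1/B)$ term in place of the Lorden constant $C(k,t)$, with almost-sure boundedness replacing second-moment conditions. The final utility step (monotonicity for the lower bound, the concavity tangent-line inequality for the upper bound) is identical to the paper's, and you share with the paper the same implicit caveat that $U_k'(\rho_k(P))$ must be finite --- e.g.\ $\rho_k(P)>0$ for $\alpha$-fair utilities --- for the $O(1/B)$ term to have a finite constant.
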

We include the complete proof of Proposition \ref{prop:rand-policy} in Appendix \ref{app:rand-policy}. The key idea is that under an SRP, the total reward of a group $k$ is a regenerative process. Then, by using the theory of stopped random walks for regenerative processes, the reward per unit time under $\pi(P)$ is found as $\rho_k(P)$, and the upper bound for the total utility is found by using Lorden's inequality \cite{asmussen2008applied} and concavity of $U_k$.

 Proposition \ref{prop:rand-policy} emphasizes the significance of the reward per unit time $\rho_k(P)$. In conjunction with Proposition \ref{prop:asy-optimality}, this suggests that using a probability distribution that maximizes the limiting total utility would be an effective offline approximation.

\begin{definition}[Optimal Stationary Randomized Policy]
Let $P^\star$ be a probability distribution defined as 
    $P^\star \in \arg\max_{P}\sum_{k\in[K]}U_k\Big(\rho_k(P)\Big)$. Then, the optimal SRP $\pi^\star$ makes a selection independently for every task according to $P^\star$: $\mathbb{P}\Big(\pi^\star_n = (k,t)\Big) = P^\star(k,t)$ for all $(k,t)\in[K]\times\mathbb{T}$ and $n \leq N_\pi(B)$.
\end{definition}

An interesting question regarding $P^\star$ is the choice of deadline policy for each group. The following proposition characterizes the optimal deadline policy under $\pi^\star$, and yields a significant simplification in finding the optimal policy by reducing the size of the search space.

\begin{proposition}[Optimal Deadline Policy]
For any $k$, the optimal probability distribution $P^\star$ makes a deterministic deadline decision for group $k$, that is, $|\{t\in\mathbb{T}:P^\star(k,t) > 0\}| \leq 1$. For any $k$, we denote $t_k^*\in\mathbb{T}$ as the (unique) optimal deadline for group $k$ such that $P^\star(k,t_k^*)>0$. 
\label{prop:optimal-deadline}
\end{proposition}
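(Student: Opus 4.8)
The plan is to eliminate the coupling created by the common denominator of $\rho_k(P)$ via a change of variables to \emph{time shares}, after which the objective decouples across groups and the claim reduces to a one-line exchange argument. Write $a_{k,t} = \E[R_{k,1}(t)]$, $b_{k,t} = \E[\min\{X_{k,1},t\}]$, and $c_{k,t} = a_{k,t}/b_{k,t}$ for the per-unit-time efficiency of deadline $t$ for group $k$. Note $b_{k,t} > 0$ for every $(k,t)$, since $X_{k,1} > 0$ almost surely and $t > 0$ force $\min\{X_{k,1},t\} > 0$ a.s.; hence $c_{k,t}$ is well defined. For a distribution $P$ with common denominator $D(P) = \sum_{(i,s)} P(i,s)\, b_{i,s} > 0$, I would set $q_{k,t} = P(k,t)\, b_{k,t}/D(P)$, interpreted as the fraction of the total time budget consumed by $(k,t)$.

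Next I would verify the two facts that make this substitution useful: $\sum_{k,t} q_{k,t} = 1$, and $\rho_k(P) = \sum_{t} q_{k,t}\, c_{k,t}$, so that the objective becomes
\begin{equation}
F(q) = \sum_{k\in[K]} U_k\Big(\sum_{t\in\mathbb{T}} q_{k,t}\, c_{k,t}\Big), \qquad q \in \Delta,
\end{equation}
where $\Delta$ is the probability simplex over $[K]\times\mathbb{T}$. The map $P \mapsto q$ is a bijection of $\Delta$ onto itself, with inverse $P(k,t) = (q_{k,t}/b_{k,t})\big/\sum_{(i,s)} q_{i,s}/b_{i,s}$ (well defined because every $b_{i,s} > 0$), so maximizing $\sum_k U_k(\rho_k(P))$ over $P$ is equivalent to maximizing $F$ over $\Delta$, and the support of $P^\star(k,\cdot)$ coincides with that of the corresponding $q^\star_{k,\cdot}$. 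The essential gain is that $\rho_k$ now depends \emph{only} on the group-$k$ shares $\{q_{k,t}\}_t$ and, for a fixed total group share $Q_k = \sum_t q_{k,t}$, equals a weighted average of the efficiencies $c_{k,t}$.

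I would then run an exchange argument. Let $q^\star$ be a maximizer of $F$, fix a group $k$, and let $t_k^* \in \arg\max_t c_{k,t}$. If $q^\star_{k,t} > 0$ for some $t$ with $c_{k,t} < c_{k,t_k^*}$, move that mass to $t_k^*$, keeping $Q_k$ and every other group's shares $q^\star_{j,\cdot}$ ($j \neq k$) unchanged; this stays in $\Delta$. Then $\rho_j$ is unchanged for all $j \neq k$, while $\rho_k$ strictly increases, so by strict monotonicity of $U_k$ (which holds for the $\alpha$-fair class, where $U_k'(x) = w_k x^{-\alpha} > 0$) the value of $F$ strictly increases, contradicting optimality. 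Hence the support of $q^\star_{k,\cdot}$, and therefore of $P^\star(k,\cdot)$, lies in $\arg\max_t c_{k,t}$; when this argmax is a singleton we conclude $|\{t : P^\star(k,t) > 0\}| = 1$ with $t_k^* = \arg\max_t \E[R_{k,1}(t)]/\E[\min\{X_{k,1},t\}]$.

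The step that needs the most care is the reparametrization: I must check that $P \mapsto q$ is a genuine bijection of the simplex and that it exactly decouples the denominator, because the whole argument becomes trivial only once the objective is written as $F(q)$; the exchange step is routine afterward. A secondary point is ties in $\arg\max_t c_{k,t}$: when several deadlines attain the maximal efficiency they are interchangeable, and one selects a single one to obtain a deterministic (and, under a tie-breaking convention, unique) optimal deadline. Strict monotonicity of $U_k$ is precisely what forces the support into the argmax set and rules out mixing across deadlines of strictly different efficiency.
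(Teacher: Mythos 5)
Your proof is correct, and it takes a genuinely different route from the paper's. The paper works in the original $P$-coordinates: it freezes the other groups' allocations (through the constant $C_k=\sum_{k'\neq k}\sum_t P^\star(k',t)\mu(k',t)$) and the group-$k$ mass $p_k=\sum_t P^\star(k,t)$, asserts that the optimal marginal $P^\star_k$ must then maximize the linear-fractional objective $\sum_t P_k(t)\theta(k,t)\big/\big(\sum_t P_k(t)\mu(k,t)+C_k\big)$, and finishes with a Dinkelbach-type transformation, an LP support condition, and a monotonicity-in-$\beta$ argument showing that fractional program is solved at a vertex. Your time-share reparametrization $q_{k,t}=P(k,t)b_{k,t}/D(P)$ replaces that machinery with a one-line exchange, and it buys something real: the paper's first step is delicate, because varying $P_k$ with $p_k$ fixed still changes the common denominator $D(P)$ and hence every $\rho_j$, $j\neq k$, so monotonicity of $U_k$ alone does not imply that the optimal marginal maximizes $\rho_k$ with the others frozen. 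Indeed it can fail: with two groups, group-$1$ deadlines $(\theta_a,\mu_a)=(1,1)$ and $(\theta_b,\mu_b)=(2,2.1)$, a single group-$2$ option $(\theta_2,\mu_2)=(1,1)$, and logarithmic utilities, the global optimum is $P^\star=(1/2,0,1/2)$, i.e., group $1$ uses the efficiency-maximizing deadline $a$, yet the paper's fractional program at $p_1=1/2$, $C_1=1/2$ strictly prefers deadline $b$ (value $1/1.55$ versus $1/2$); your $q$-coordinates are immune to this coupling because moving mass within group $k$ leaves every other group's rate literally unchanged. As a bonus, your argument proves more than the proposition asks: it identifies $t_k^*\in\arg\max_t \mathbb{E}[R_{k,1}(t)]/\mathbb{E}[\min\{X_{k,1},t\}]$ for every concave, continuously differentiable, strictly increasing utility, a characterization the paper only states in Proposition \ref{prop:alpha-fairness} for the $\alpha$-fair class. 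Do keep the two caveats you flag: strict monotonicity of $U_k$ is genuinely needed (with merely nondecreasing utilities the exchange yields only the existence of a deterministic-deadline optimizer, not that every optimizer is deterministic), and under ties in $\arg\max_t c_{k,t}$ the literal claim $|\{t:P^\star(k,t)>0\}|\leq 1$ holds only after a tie-breaking selection --- a caveat the paper's statement shares.
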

The detailed proof of Prop. \ref{prop:optimal-deadline} can be found in Appendix \ref{app:optimal-deadline}. As we will see later, we can explicitly characterize the optimal deadline for a broad class of utility functions used for the so-called $\alpha$-fair allocations.
In the following, we use Prop. \ref{prop:rand-policy} to characterize the performance of the optimal SRP.

\begin{proposition}[Optimal Total Utility]
For any group $k$, let $t_k^* \in \mathbb{T}$ be the (unique) optimal deadline by Prop. \ref{prop:optimal-deadline}; $r_k^* = \mathbb{E}[R_{k,1}(t_k^*)]/\mathbb{E}[\min\{X_{k,1},t_k^*\}]$ be the reward per processing time for group $k$; and
\begin{align}
    \varphi_k = \frac{P^\star(k,t_k^*)\cdot \mathbb{E}[\min\{X_{k,1},t_k^*\}]}{\sum_{j\in[K]}P^\star(j,t_j^*)\cdot \mathbb{E}[\min\{X_{j,1},t_j^*\}]},
\end{align}
be the fraction of time budget allocated to group $k$ under $\pi(P^\star)$. Then, for any SRP $\pi(P)$, the total utility is bounded as $\sum_{k}U_k\Big(\rho_k(P)\Big) \leq \sum_kU_k\Big((U_k^\prime)^{-1}\big(\frac{\lambda}{r_k^*}\big)\Big),$
\noindent where the upper bound is achieved by the probability distribution that satisfies $\varphi_k = \frac{1}{r_k^*}(U_k^\prime)^{-1}\big(\frac{\lambda}{r_k^*}\big)$ for $\lambda$ such that $\sum_k\varphi_k = 1$.
\label{prop:optimal-utility}
\end{proposition}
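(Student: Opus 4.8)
The plan is to reduce the utility maximization over all stationary randomized policies to a finite-dimensional concave program over the probability simplex of time-allocation fractions, and then solve that program via Lagrangian/KKT conditions. First I would rewrite $\rho_k(P)$ from Proposition \ref{prop:rand-policy} in terms of two aggregate quantities per group: the expected processing-time mass $\tau_k(P) = \sum_{t\in\mathbb{T}} P(k,t)\,\mathbb{E}[\min\{X_{k,1},t\}]$ and the expected reward mass $\mu_k(P) = \sum_{t\in\mathbb{T}} P(k,t)\,\mathbb{E}[R_{k,1}(t)]$. Writing $\bar\tau(P)=\sum_k \tau_k(P)$ for the mean processing time per task and $\phi_k(P)=\tau_k(P)/\bar\tau(P)$ for the fraction of the budget spent on group $k$, Proposition \ref{prop:rand-policy} gives $\rho_k(P)=\mu_k(P)/\bar\tau(P)$, and the vector $\big(\phi_k(P)\big)_k$ lies in the simplex.

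Second, I would decouple the deadline choice using the reward-per-processing-time ratio $r_k(t)=\mathbb{E}[R_{k,1}(t)]/\mathbb{E}[\min\{X_{k,1},t\}]$. Since $\mu_k(P)=\sum_{t} P(k,t)\,\mathbb{E}[\min\{X_{k,1},t\}]\,r_k(t)\le r_k^\star\,\tau_k(P)$ with $r_k^\star=\max_{t\in\mathbb{T}} r_k(t)$, and this maximum is attained at $t_k^*$ by Proposition \ref{prop:optimal-deadline} (so the definition of $r_k^*$ is consistent and equality holds exactly when all group-$k$ mass is placed at $t_k^*$), dividing by $\bar\tau(P)$ yields $\rho_k(P)\le r_k^*\,\phi_k(P)$. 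Monotonicity of each $U_k$ then gives, for every SRP,
\[
\sum_{k} U_k\big(\rho_k(P)\big)\ \le\ \sum_{k} U_k\big(r_k^*\,\phi_k(P)\big).
\]

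Third, I would solve the reduced program: maximize $F(\phi)=\sum_k U_k(r_k^*\phi_k)$ over $\{\phi\ge 0:\sum_k\phi_k=1\}$. Because each $U_k$ is concave and increasing and $r_k^*>0$, $F$ is concave and the KKT conditions are sufficient. Introducing a multiplier $\lambda$ for the equality constraint, stationarity reads $r_k^*\,U_k'(r_k^*\phi_k)=\lambda$, i.e. $\phi_k=\tfrac{1}{r_k^*}(U_k')^{-1}\big(\tfrac{\lambda}{r_k^*}\big)$ with $\lambda$ pinned down by $\sum_k\phi_k=1$; substituting $r_k^*\phi_k=(U_k')^{-1}(\lambda/r_k^*)$ back into $F$ produces the claimed value $\sum_k U_k\big((U_k')^{-1}(\lambda/r_k^*)\big)$. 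Finally I would verify that this optimum is attainable by a genuine SRP, namely the deterministic-deadline distribution placing mass $P^\star(k,t_k^*)$ so that $\varphi_k$ coincides with the optimal $\phi_k$, which closes the inequality above and identifies the maximizer.

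The \emph{main obstacle} is the invertibility and feasibility bookkeeping rather than the reduction itself: I need $U_k'$ strictly decreasing (strict concavity) so that $(U_k')^{-1}$ is well defined, and I must argue existence of a $\lambda$ for which $\sum_k\phi_k(\lambda)=1$ holds with all $\phi_k\ge 0$. This follows from continuity and monotonicity of $\phi_k(\lambda)$ in $\lambda$ over the range of $U_k'$ together with the intermediate value theorem, which is especially clean for the $\alpha$-fair utilities of Definition \ref{def:alphaFair} where $U_k'$ maps $(0,\infty)$ onto $(0,\infty)$; boundary/non-interior cases would be handled by this monotonicity argument. Translating the optimal $\phi$ back into a valid probability distribution $P^\star$ and matching it to the $\varphi_k$ expression is then routine given Proposition \ref{prop:optimal-deadline}.
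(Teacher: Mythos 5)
Your proposal is correct and follows essentially the same route as the paper: both reduce the problem to maximizing $\sum_k U_k(r_k^*\phi_k)$ over the simplex of time-budget fractions and then solve this concave program by Lagrangian stationarity, $r_k^*U_k'(r_k^*\phi_k)=\lambda$, with KKT conditions certifying optimality. Your only deviation is a refinement: where the paper invokes Proposition \ref{prop:optimal-deadline} to write the identity $\rho_k(P)=r_k^*\widehat{\varphi}_k(P)$ directly (implicitly restricting to deterministic-deadline distributions), you prove the inequality $\rho_k(P)\le r_k^*\phi_k(P)$ for \emph{arbitrary} deadline randomization and then exhibit an attaining $P^\star$, which handles general SRPs a bit more carefully but does not change the substance of the argument.
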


The proof of Proposition \ref{prop:optimal-utility} follows from Lagrange duality and 
Prop. \ref{prop:optimal-deadline}, and can be found in Appendix \ref{app:opt-utility}. Note that the above analysis is very general in the sense that it holds for any set of utility functions $\{U_k:\mathbb{R}\rightarrow\mathbb{R}:k\in[K]\}$ that are continuously differentiable and concave. In the following, we apply the results to the class of $\alpha$-fair allocations (cf. Definition~\ref{def:alphaFair}) and discuss their implications.




\begin{proposition}[$\alpha$-Fair Resource Allocation in Continuous Time]
For any group $k$, the optimal deadline is $t_k^* = \displaystyle\arg\max_{t\in\mathbb{T}}~\frac{\mathbb{E}[R_{k,1}(t)]}{\mathbb{E}[\min\{X_{k,1}, t\}]}.$ Also, let $r_k^*=\max_{t\in\mathbb{T}}~\frac{\mathbb{E}[R_{k,1}(t)]}{\mathbb{E}[\min\{X_{k,1}, t\}]}$ be the reward per processing time and $\mu_k = \mathbb{E}[\min\{X_{k,1},t_k^*\}]$ be the mean processing time for group $k$. Then, for any $\alpha>0$, we have the following results for $\alpha$-fair utility functions:
\begin{align}
    \max_P~U^{\pi(P)}(B) = \frac{1}{1-\alpha}\Big(\sum_{k\in[K]}(r_k^*)^{\frac{1}{\alpha}-1}w_k^\frac{1}{\alpha}\Big)^\alpha,
\end{align}
where the optimum probability distribution $P_k^*$ and the optimum fraction of time budget $\varphi_k$ allocated to group $k$ are, respectively, given by: $$P^\star(k,t) = \mathbb{I}\{t=t_k^*\}\frac{w_k^{\frac{1}{\alpha}}(r_k^*)^{\frac{1}{\alpha}-1}/\mu_k}{\sum_{j\in[K]}w_j^{\frac{1}{\alpha}}(r_j^*)^{\frac{1}{\alpha}-1}/\mu_j}, \quad \varphi_k = \frac{(r_k^*)^{\frac{1}{\alpha}-1}w_k^{\frac{1}{\alpha}}}{\sum_{j\in[K]}(r_j^*)^{\frac{1}{\alpha}-1}w_j^{\frac{1}{\alpha}}}, \forall k\in[K].$$
\label{prop:alpha-fairness}
\end{proposition}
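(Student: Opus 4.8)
The plan is to specialize the general characterization of Proposition \ref{prop:optimal-utility} to the $\alpha$-fair utilities of Definition \ref{def:alphaFair} by computing the relevant inverse-derivative in closed form and then carrying out the bookkeeping for the normalization. First I would record the two elementary facts about $U_k(x)=w_k\frac{x^{1-\alpha}}{1-\alpha}$: its derivative is $U_k'(x)=w_k x^{-\alpha}$, which is strictly decreasing on $(0,\infty)$ so that $U_k$ is indeed concave and increasing, and its inverse is $(U_k')^{-1}(y)=(w_k/y)^{1/\alpha}$. These are the only structural inputs I need from the $\alpha$-fair form.

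Second, I would pin down the optimal deadline. The key structural observation is a decoupling: writing $r_k(t)=\mathbb{E}[R_{k,1}(t)]/\mathbb{E}[\min\{X_{k,1},t\}]$ and letting $\varphi_k$ be the time fraction of group $k$, Proposition \ref{prop:rand-policy} gives $\rho_k=\varphi_k\, r_k(t_k)$ whenever group $k$ uses a single deadline $t_k$, which Proposition \ref{prop:optimal-deadline} guarantees is optimal. Since each $U_k$ is monotonically increasing, for any fixed allocation $\{\varphi_k\}$ the deadline $t_k$ enters the objective only through $r_k(t_k)$ and is therefore maximized pointwise by $t_k^*=\arg\max_{t\in\mathbb{T}}r_k(t)$, independently of $\{\varphi_k\}$. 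This yields $r_k^*=\max_t r_k(t)$ and reduces the problem exactly to the setting of Proposition \ref{prop:optimal-utility} with these fixed $r_k^*$.

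Third, I would substitute. By Proposition \ref{prop:optimal-utility} the optimizer satisfies $\rho_k=(U_k')^{-1}(\lambda/r_k^*)=(w_k r_k^*/\lambda)^{1/\alpha}$ and $\varphi_k=\rho_k/r_k^*=w_k^{1/\alpha}(r_k^*)^{1/\alpha-1}\lambda^{-1/\alpha}$. Imposing $\sum_k\varphi_k=1$ fixes $\lambda^{-1/\alpha}=1/Z$ with $Z:=\sum_{j}w_j^{1/\alpha}(r_j^*)^{1/\alpha-1}$, which immediately gives the claimed $\varphi_k=w_k^{1/\alpha}(r_k^*)^{1/\alpha-1}/Z$. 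To recover $P^\star$ I would invert the relation $\varphi_k=P^\star(k,t_k^*)\mu_k/\sum_j P^\star(j,t_j^*)\mu_j$ with $\mu_k=\mathbb{E}[\min\{X_{k,1},t_k^*\}]$: this gives $P^\star(k,t_k^*)\propto \varphi_k/\mu_k$, and after normalizing and substituting the formula for $\varphi_k$ (the common factor $Z$ cancels) one obtains the stated expression, supported on $t=t_k^*$.

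Finally, for the optimal value I would plug $\rho_k=(w_k r_k^*)^{1/\alpha}/Z$ into $\sum_k U_k(\rho_k)=\frac{1}{1-\alpha}\sum_k w_k\rho_k^{1-\alpha}$. The exponent arithmetic gives $w_k\rho_k^{1-\alpha}=w_k^{1/\alpha}(r_k^*)^{1/\alpha-1}Z^{-(1-\alpha)}$, whose sum over $k$ collapses to $Z\cdot Z^{-(1-\alpha)}=Z^\alpha$, yielding $\frac{1}{1-\alpha}Z^\alpha$ as claimed, where $\max_P U^{\pi(P)}(B)$ is understood via Proposition \ref{prop:rand-policy}, i.e. up to the $O(1/B)$ gap between $U^{\pi(P)}(B)$ and $\sum_k U_k(\rho_k(P))$. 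I expect the only genuinely nonroutine step to be the decoupling argument of the second paragraph, namely establishing that the deadline maximizing the reward-per-processing-time ratio is optimal irrespective of the fairness weighting $\alpha$ and the induced time allocation; the remaining steps are careful but mechanical exponent bookkeeping, where the main pitfall is mismanaging the powers of $1/\alpha$ and the sign of $1-\alpha$.
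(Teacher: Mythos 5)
Your proposal is correct and takes essentially the same route as the paper, which obtains this proposition as a direct specialization of Proposition~\ref{prop:optimal-utility} (using the deterministic-deadline structure from Proposition~\ref{prop:optimal-deadline} and the rate formula of Proposition~\ref{prop:rand-policy}); your closed-form inversion $(U_k')^{-1}(y)=(w_k/y)^{1/\alpha}$ and the subsequent exponent bookkeeping for $\varphi_k$, $P^\star$, and the value $\frac{1}{1-\alpha}Z^\alpha$ all check out. The one point where you go beyond the paper is in making explicit the decoupling argument that the ratio-maximizing deadline $t_k^*$ is optimal irrespective of the time allocation $\{\varphi_k\}$ --- the paper leaves this step implicit between Propositions~\ref{prop:optimal-deadline} and~\ref{prop:optimal-utility} --- which is a welcome clarification rather than a different approach.
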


To gain a clear understanding of the notion of $\alpha$-fairness, we consider the following special cases.

\begin{corollary}
For any given set of parameters $\{w_k> 0:k\in[K]\}$, we have the following results for continuous-time $\alpha$-fair resource allocation problem for various $\alpha>0$ values.
   \begin{enumerate}[leftmargin=*]
    \item[(i)] \textbf{Proportional fairness:} In this case, we have $\lim_{\alpha\rightarrow 1}U_k(x) = w_k\log(x)$ for all $k$. Let $\mu_k = \mathbb{E}[\min\{X_{k,1},t_k^*\}]$ be the mean processing time for group $k$. Then, the optimum utility is achieved by the probability distribution $P^\star(k,t) = \mathbb{I}\{t=t_k^*\}\frac{w_k/\mu_k}{\sum_{j\in[K]}w_j/\mu_j},~(k,t)\in[K]\times\mathbb{T},$ thus we have $\varphi_k = \frac{w_k}{\sum_{j\in[K]}w_j}$ for all $k$ and ${\tt OPT}_{\Pi_S}(B) = \sum_k\log\Big(\frac{r_k^*w_k}{\sum_{k^\prime\in[K]}w_{k^\prime}}\Big)+O(\frac{1}{B})$.
    \item[(ii)] \textbf{Reward maximization:} If $\alpha=0$, we have $U_k(x) = \omega_kx$ for all $k$. Let $k^*=\arg\max_{k\in[K]}~w_kr_k^*$ be the group with highest weighted reward rate. Then, the optimal probability distribution  is $P^\star(k,t) = \mathbb{I}\{k=k^*,~t=t_k^*\},$ for all $(k,t)$. Thus, ${\tt OPT}_{\Pi_S}(B)=\max_{k\in[K]}w_kr_k^*+O(1/B)$.
\end{enumerate}
\label{cor:alpha-fairness}
\end{corollary}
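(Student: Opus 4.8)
The plan is to derive both special cases directly from Proposition~\ref{prop:alpha-fairness} (and, where a limit is singular, from Proposition~\ref{prop:optimal-utility}), since the corollary merely specializes the $\alpha$-fair allocation to $\alpha\to 1$ and $\alpha=0$. The common ingredient I would establish first is the identity relating reward rate to time-share: under the optimal deterministic-deadline SRP one has $\rho_k(P^\star)=r_k^*\,\varphi_k$. This follows by substituting $P^\star(k,t)=\mathbb{I}\{t=t_k^*\}P^\star(k,t_k^*)$ and $\mathbb{E}[R_{k,1}(t_k^*)]=r_k^*\mu_k$ into the expression for $\rho_k(P)$ in Proposition~\ref{prop:rand-policy} and recognizing the definition of $\varphi_k$. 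With this identity the total utility equals $\sum_k U_k(r_k^*\varphi_k)$ up to the $O(1/B)$ slack guaranteed by Proposition~\ref{prop:rand-policy}.

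For part (i) I take $\alpha\to 1$. Writing the $\alpha$-fair utility in the shifted form $w_k\frac{x^{1-\alpha}-1}{1-\alpha}$, whose maximizer over $P$ is unchanged by the additive constant, the standard limit $\frac{x^{1-\alpha}-1}{1-\alpha}\to\log x$ recovers the proportional-fairness utility $w_k\log x$. The formulas for $P^\star$ and $\varphi_k$ then follow by continuity: as $\alpha\to1$, $w_k^{1/\alpha}\to w_k$ and $(r_k^*)^{1/\alpha-1}\to 1$, giving $P^\star(k,t)=\mathbb{I}\{t=t_k^*\}\frac{w_k/\mu_k}{\sum_j w_j/\mu_j}$ and $\varphi_k=\frac{w_k}{\sum_j w_j}$. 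For the optimal utility I avoid the singular prefactor $\frac{1}{1-\alpha}$ by computing directly: plugging $\rho_k=r_k^*\varphi_k=r_k^*\frac{w_k}{\sum_j w_j}$ into $\sum_k U_k(\rho_k)=\sum_k w_k\log\rho_k$ yields ${\tt OPT}_{\Pi_S}(B)=\sum_k w_k\log\big(\frac{r_k^* w_k}{\sum_{k'}w_{k'}}\big)+O(1/B)$. As a cross-check, the same $\varphi_k$ drops out of Proposition~\ref{prop:optimal-utility} with $U_k(x)=w_k\log x$: there $(U_k')^{-1}(y)=w_k/y$, so $\varphi_k=\frac{1}{r_k^*}(U_k')^{-1}(\lambda/r_k^*)=\frac{w_k}{\lambda}$, and the normalization $\sum_k\varphi_k=1$ forces $\lambda=\sum_j w_j$.

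For part (ii), $\alpha=0$ makes the utility linear, $U_k(x)=w_k x$, so the limiting objective is $\sum_k U_k(\rho_k)=\sum_k w_k r_k^*\varphi_k$, a linear functional of $\varphi$ on the simplex $\{\varphi\ge 0:\sum_k\varphi_k=1\}$. A linear program over the simplex attains its maximum at a vertex, hence $\varphi_{k^*}=1$ for $k^*=\arg\max_k w_k r_k^*$ and $\varphi_k=0$ otherwise, giving ${\tt OPT}_{\Pi_S}(B)=\max_k w_k r_k^*+O(1/B)$ and $P^\star(k,t)=\mathbb{I}\{k=k^*,t=t_k^*\}$. Equivalently I would confirm consistency with Proposition~\ref{prop:alpha-fairness} by a dominant-term limit: writing $(r_k^*)^{1/\alpha-1}w_k^{1/\alpha}=(r_k^* w_k)^{1/\alpha}/r_k^*$ and factoring out the largest $(r_k^* w_k)$, the share $\varphi_k$ concentrates on $k^*$ as $\alpha\to0^+$, while $\frac{1}{1-\alpha}\big(\sum_k(r_k^* w_k)^{1/\alpha}/r_k^*\big)^\alpha\to \max_k r_k^* w_k$.

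The main obstacle is handling the two singular limits cleanly. At $\alpha\to1$ both the $\alpha$-fair utility and the prefactor $\frac{1}{1-\alpha}$ diverge, so one cannot simply set $\alpha=1$ in the closed-form OPT of Proposition~\ref{prop:alpha-fairness}; the remedy is the additive-constant renormalization combined with the direct computation of $\sum_k w_k\log\rho_k$. At $\alpha\to0$ the exponent $1/\alpha\to\infty$ turns $\varphi_k$ into a softmax that degenerates to a hard max, and the stationarity formula $\varphi_k=\frac{1}{r_k^*}(U_k')^{-1}(\lambda/r_k^*)$ breaks down because $U_k'\equiv w_k$ is not invertible; this is exactly why part (ii) must be argued by vertex optimality of a linear program rather than by the KKT conditions used for $\alpha>0$. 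Uniqueness of the maximizing group $k^*$ (and of each $t_k^*$) is what lets the limit single out one group; ties would simply be resolved by any convex combination among the maximizers.
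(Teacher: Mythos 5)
Your proposal is correct, and it follows what is essentially the paper's own route: the paper gives no standalone proof of this corollary, treating it as a direct specialization of Proposition~\ref{prop:optimal-utility} and Proposition~\ref{prop:alpha-fairness}, which is exactly what you do. Your additions are precisely the steps needed to make that specialization rigorous, and they are handled correctly: the identity $\rho_k(P^\star)=r_k^*\varphi_k$ (which is also how the paper's Appendix~\ref{app:opt-utility} reduces the problem to an optimization over $\varphi$), the additive renormalization $w_k\frac{x^{1-\alpha}-1}{1-\alpha}$ to tame the divergence of both the utility and the prefactor $\frac{1}{1-\alpha}$ as $\alpha\to 1$, and the linear-programming vertex argument at $\alpha=0$, where you rightly observe that the stationarity condition $\varphi_k=\frac{1}{r_k^*}(U_k^\prime)^{-1}(\lambda/r_k^*)$ is unusable because $U_k^\prime\equiv w_k$ is not invertible. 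One discrepancy deserves attention: for proportional fairness you obtain ${\tt OPT}_{\Pi_S}(B)=\sum_k w_k\log\big(\frac{r_k^*w_k}{\sum_{k^\prime}w_{k^\prime}}\big)+O(1/B)$, with the weight $w_k$ multiplying each logarithm, whereas the corollary as stated in the paper reads $\sum_k\log\big(\frac{r_k^*w_k}{\sum_{k^\prime}w_{k^\prime}}\big)+O(1/B)$. Your version is the correct one, since the limiting utility is $U_k(x)=w_k\log x$ and hence the total utility at the optimum is $\sum_k w_k\log\rho_k$; the paper's expression drops the weights and agrees with yours only in the uniform case $w_k\equiv 1$ (which is the case used in the paper's simulations), so this is a typo in the statement rather than a gap in your argument.
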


\begin{remark}\normalfont
Note that optimal deadline $t_k^*$ for any group $k$ is chosen so as to maximize the reward per processing time of group $k$. Under proportional fairness ($\alpha\rightarrow 1$), the controller distributes the time budget proportional to group weights, i.e., $\varphi_k = w_k/\sum_{j}w_k$, which reduces to equal time-sharing under uniform weights. 
To achieve this, the controller allocates tasks with probability inversely proportional to the mean processing time $\mu_k$. Under reward maximization ($\alpha=0$), the controller allocates the entire time budget $B$ to a single group that yields the highest reward per processing time to maximize the expected total reward, i.e., $\varphi_{k}=\mathbb{I}\{k=k^*\}$. As such, the trade-off between reward maximization and equal (i.e., reward-insensitive) time-sharing is modeled by $\alpha$-fairness for any $\alpha\in[0,1)$. Further, the $\alpha$-fair utility maximization framework includes max-min fairness $(\alpha\rightarrow\infty)$ and minimum potential delay fairness $(\alpha = 2)$ as subcases.
\end{remark}

\section{Online Learning for Utility Maximization ({\tt OLUM})} \label{sec:online}
\vspace{-0.1cm}

In the previous section, we provided key results on the asymptotically optimal approximations to the offline utility maximization problem. In this section, we will build on these to attack the online learning problem for continuous-time fair  allocation. In particular, we will propose a novel light-weight online learning algorithm for the fair resource allocation problem based on Lagrangian duality, and show that it achieves vanishing regret at rate $\tilde{O}(B^{-1/2})$.

\textbf{Feedback model:} We assume a delayed full-information feedback model where the completion time and reward of all groups for task $n$ are revealed to the controller at stage $n+\tau$ for some delay $\tau \geq 1$.

This assumption holds approximately for our target applications. In freelancing platforms, there are often multiple contractors that hire freelancers for various tasks. It is often possible to get full information on various freelancers due to employment by other companies and their reviews can serve as the feedback for the controller. Competitions hosting websites like {\sc Topcoder} have also recently been catering to businesses who need fast-prototyping using freelancers. In their business model, a controller might invest in a few topcoders at a time, however, she can potentially get access to updated rankings (quality and time to complete tasks) via topcoder competitions over time. In server applications such as Amazon AWS and Microsoft Azure as well, although a controller might be optimizing operations on a local set of servers, they can request task performance data from a centralized server or a scheduler after a delay in time \cite{Zabolotnyi2014}. 
This feedback model already presents with technical challenges due to random completion times, as we discuss next. 


In order to design the online learning algorithm, let us define, 
for any $(k,t)\in[K]\times \mathbb{T}$, the empirical estimates of the mean completion time and reward after $n$ stages, respectively, as
\begin{align*}
    \widehat{\mu}_{k,n}(t) \> =\> \frac{1}{n}\sum_{i=1}^n\min\{t,X_{k,i}\},\qquad \text{and} \quad
    \widehat{\theta}_{k,n}(t) \>=\> \frac{1}{n}\sum_{i=1}^nR_{k,i}(t).
\end{align*}

\begin{definition}[{\tt OLUM} Algorithm]
For any $k$, let $Q_{k,0}=1$ and $Q_{k,i}$ be defined recursively as follows: \begin{equation}Q_{k,i+1}=\Big(Q_{k,i}+\gamma_k(i)\min\{X_{G_i,i},T_i\}-R_{k,i}(T_i)\mathbb{I}\{G_i=k\}\Big)^+, \qquad {i>0}\label{eqn:q-update}\end{equation} where the auxiliary variable $\gamma_k({i}) = \big(U_k^\prime\big)^{-1}\Big(Q_{k,{i}}/V\Big),$ {where $V>0$ is a design choice.} Then, for the task $n$, the {\tt OLUM} Algorithm, denoted by $\pi^{\tt OLUM}$, makes the following decision: $$(G_{n},T_{n}) \in \underset{(k,t)\in[K]\times\mathbb{T}}{\arg\max}~\frac{\widehat{\theta}_{k,n-\tau}(t)Q_{k,n}}{\widehat{\mu}_{k,n-\tau}(t)}.$$ Upon observing the corresponding feedback, the controller updates $Q_{k,n+1}$ via \eqref{eqn:q-update}.
\label{def:olum}
\end{definition}

\textbf{Interpretation:} The {\tt OLUM} Algorithm aims to maximize the time-average reward weighted with $Q_{k,n}$ at each round. Note that for any $k\in[K]$, if the sequence $Q_{k,n}$ gets very big, then its reward rate is much smaller than the optimal value, thus the controller tends to select that group. In other words, {the magnitude of $Q_{k,n}$ is a measure of the unfairness that group $k$ has endured by stage $n$}. The algorithm is designed so as to balance the weights $Q_{k,n}$ to maximize the total utility. 

In the following theorem, we prove regret bounds for the {\tt OLUM} Algorithm.
\begin{theorem}[Regret bounds for {\tt OLUM}]\label{thm:regret} For any $V>0$ and constant delay $\tau$, the regret under $\pi^{\tt OLUM}$ is bounded as ${\tt REG}_{\Pi_S}{^{\pi^{\tt OLUM}}}(B) = O\Big(\sqrt{\frac{\log(B)}{B}} + \frac{V}{B}+\frac{1}{V}\Big).$ By choosing $V = \Theta(\sqrt{B/\log(B)})$, we obtain ${\tt REG}_{\Pi_S}{^{\pi^{\tt OLUM}}}(B) = O(\sqrt{\log(B)/B})=\tilde{O}(1/\sqrt{B})$. 
\end{theorem}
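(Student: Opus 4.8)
The plan is to read {\tt OLUM} as a dual-ascent, equivalently Lyapunov drift-plus-penalty, scheme: each $Q_{k,n}$ acts as a dual variable (virtual queue) enforcing the rate constraint $\gamma_k\le\overline r_k$, the update \eqref{eqn:q-update} is a subgradient (dual) ascent step, and the auxiliary choice $\gamma_k(n)=(U_k')^{-1}(Q_{k,n}/V)$ is the primal maximizer of the Lagrangian $V U_k(\gamma_k)-Q_{k,n}\gamma_k$. I would introduce the quadratic Lyapunov function $L(Q_n)=\tfrac12\sum_k Q_{k,n}^2$ and track, for each task $n$, the drift-plus-penalty $\Delta_n:=\mathbb{E}[L(Q_{n+1})-L(Q_n)\mid\mathcal{H}_n]-V\,\mathbb{E}[\sum_k U_k(\gamma_k(n))\mid\mathcal{H}_n]$.

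Using $((x)^+)^2\le x^2$ and expanding \eqref{eqn:q-update}, the drift is bounded by a finite constant $B_0$ (finite since $R_{k,n}(t)\le R_{max}(t)$ and $\min\{X_{G_n,n},T_n\}\le\max_{t\in\mathbb{T}}t<\infty$) plus the linear term $\sum_k Q_{k,n}(\gamma_k(n)\Delta t_n-r_{k,n})$, with $\Delta t_n=\min\{X_{G_n,n},T_n\}$ and $r_{k,n}=R_{k,n}(T_n)\mathbb{I}\{G_n=k\}$. Normalizing the conditional expectation of this term by the renewal-frame length $\mathbb{E}[\min\{X_{k,1},t\}]$ shows that minimizing the per-unit-time drift over the task decision is exactly equivalent to maximizing $Q_{k,n}\mathbb{E}[R_{k,1}(t)]/\mathbb{E}[\min\{X_{k,1},t\}]$ over $(k,t)$, i.e.\ the {\tt OLUM} rule with the true means replaced by $\widehat\theta_{k,n-\tau},\widehat\mu_{k,n-\tau}$. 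Since $\pi(P^\star)\in\Pi_S$ is a feasible stationary choice whose virtual queues are stable and whose auxiliary utility equals ${\tt OPT}_{\Pi_S}(B)$ by Proposition~\ref{prop:optimal-utility} (up to $O(1/B)$), the minimizing property of {\tt OLUM} bounds $\Delta_n$, after the frame normalization, by $-V\,{\tt OPT}_{\Pi_S}(B)$ per unit time plus the constant $B_0$ and an estimation slack $\varepsilon_n$ arising from using empirical means in the $\arg\max$.

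Telescoping the per-task inequality over $n=1,\dots,N(B)$ makes the drift collapse to $L(Q_{N(B)+1})-L(Q_1)\ge -L(Q_1)$, a constant, and renewal theory—Wald's identity together with $\sum_{n=1}^{N(B)}\Delta t_n\in(B,\,B+\max_{t\in\mathbb{T}}t]$ and $N(B)=\Theta(B)$—converts the sum into a time average over $[0,B]$. This yields $\tfrac1B\sum_{n\le N(B)}\sum_k U_k(\gamma_k(n))\Delta t_n\ge {\tt OPT}_{\Pi_S}(B)-O(1/V)-O\big(\tfrac1B\sum_{n\le N(B)}\varepsilon_n\big)$. To replace the auxiliary utilities by the true ones $U_k(\overline r_k^{\pi^{\tt OLUM}}(B))$, I would use the virtual-queue lower bound $\overline r_k^{\pi^{\tt OLUM}}(B)\ge\tfrac1B\sum_{n\le N(B)}\gamma_k(n)\Delta t_n-Q_{k,N(B)+1}/B$ together with Jensen's inequality (concavity of $U_k$); the deterministic backlog bound $Q_{k,n}=O(V)$—which holds because $(U_k')^{-1}$ is decreasing, so $\gamma_k(n)\to0$ once $Q_{k,n}$ is large and arrivals shut off while departures stay bounded—turns this into the $O(V/B)$ term.

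Collecting the contributions gives the duality gap $O(1/V)$, the transient backlog cost $O(V/B)$, and the learning error. For the last, the delayed full-information feedback is decisive: $\widehat\theta_{k,n-\tau}(t)$ and $\widehat\mu_{k,n-\tau}(t)$ are empirical means of $n-\tau$ i.i.d.\ samples for \emph{every} group at once, so a union bound over the finite set $[K]\times\mathbb{T}$ with Hoeffding/Bernstein concentration gives $\varepsilon_n=O(\sqrt{\log n/n})+O(\tau/n)$ uniformly, whence $\tfrac1B\sum_{n\le N(B)}\varepsilon_n=O(\sqrt{\log B/B})$. Substituting and balancing $V/B$ against $1/V$ at $V=\Theta(\sqrt{B/\log B})$ yields ${\tt REG}_{\Pi_S}^{\pi^{\tt OLUM}}(B)=O(\sqrt{\log B/B}+V/B+1/V)=\tilde{O}(1/\sqrt{B})$. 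I expect the main obstacle to be the coupling of the renewal structure with the drift argument: because the task lengths $\Delta t_n$ are random and $N(B)$ is a controlled stopping time, the frame-normalization that legitimizes the ratio form of the {\tt OLUM} rule, the Wald-type identities in the telescoping, and the propagation of the $\arg\max$ estimation error must be controlled jointly; in particular, ensuring that the estimation error inside the $\arg\max$ costs only $\sqrt{\log B/B}$ rather than accumulating across tasks is the most delicate step.
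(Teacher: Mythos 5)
Your overall route is the same as the paper's: read \texttt{OLUM} as drift-plus-penalty dual ascent with the quadratic Lyapunov function, compare its (ratio-normalized) drift against the optimal stationary randomized policy $\pi(P^\star)$, control the empirical-versus-true $\arg\max$ error by concentration plus a union bound over $[K]\times\mathbb{T}$, and finally convert the per-task analysis into a budget-$B$ statement using $N(B)=\Theta(B)$. However, there is one step in your argument that is genuinely false as stated: the \emph{deterministic} backlog bound $Q_{k,n}=O(V)$, justified by ``arrivals shut off once $Q_{k,n}$ is large.'' For the $\alpha$-fair class that this framework is built around, $U_k'(x)=w_kx^{-\alpha}$, so $\gamma_k(n)=(U_k')^{-1}(Q_{k,n}/V)=(w_kV/Q_{k,n})^{1/\alpha}>0$ for every finite $Q_{k,n}$: arrivals decay but never vanish. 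On sample paths where group $k$'s tasks repeatedly fail to complete by the deadline (an event of positive probability per task), the departure term $R_{k,n}(T_n)\mathbb{I}\{G_n=k\}$ is zero while the arrival stays positive, and $Q_{k,n}$ grows without bound in $n$ regardless of $V$ (roughly like $V^{1/(\alpha+1)}n^{\alpha/(\alpha+1)}$). No pathwise $O(V)$ bound exists. What is true, and what the paper proves (its Lemma on the dual variables, via Hajek's hitting-time/negative-drift theorem), is the bound \emph{in expectation}: once $\sum_kQ_{k,n}$ exceeds a $\Theta(V)$ threshold, the \texttt{OLUM} selection rule forces conditional drift $\leq-\epsilon$, which yields exponential tails and hence $\mathbb{E}[\sum_kQ_{k,n}]=O(V)+O(1/\epsilon)$. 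Your $O(V/B)$ constraint-violation term, and therefore the final regret bound, must be run through this expectation bound; the argument you gave in its place would fail.

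Two further points where your sketch defers to exactly the difficulties the paper has to resolve. First, you correctly flag the stopping-time coupling but leave it open: since $N(B)$ is a controlled stopping time correlated with the increments, Wald's identity is not directly available. The paper sidesteps it by taking the deterministic horizon $n_0(B)=\lceil 2B/\mu_{\min}\rceil$ and showing $\mathbb{P}(N^\pi(B)>n)\leq e^{-n\Omega(1)}$ for $n>n_0(B)$ via Azuma--Hoeffding, so the tail beyond $n_0(B)$ contributes only $o(1)$ and the per-task bounds can be applied at $N=n_0(B)$. Second, note that the estimation error does not enter the drift as a bare $\varepsilon_n$: it enters multiplied by $\sum_kQ_{k,n}$ (the $\arg\max$ compares $Q$-weighted rates), i.e., as $\varepsilon_n\cdot O(V)$ in expectation, and it is only after dividing the penalty term by $V$ that a clean $\varepsilon_n=O(\sqrt{\log n/n})$ survives, summing to $O(\sqrt{\log B/B})$; this is precisely the mechanism of the paper's PAC-bound proposition, and your accounting happens to land on the right answer because the factor of $V$ cancels, but the cancellation needs to be made explicit, and it again requires the expectation bound on $\sum_k Q_{k,n}$ rather than the pathwise bound you asserted.
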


The proof is based on PAC bounds and stochastic dual optimization, and can be found in Appendix \ref{app:regret}.

\section{Simulations}\label{sec:numerical-examples}

We implemented the {\tt OLUM} Algorithm on a fair resource allocation problem with $K=2$ groups. In the application domains that we considered in Section \ref{sec:model}, the task completion times naturally follow a power-law distribution. For example, in the server allocation example, empirical studies indicate that the distribution of job execution times can be accurately approximated by a Pareto(1, $\gamma)$ distribution with exponent $\gamma \in (0,2)$ \cite{harchol1999ect}. Similarly, for the contractual online hiring setting, creativity of individuals has been shown to follow a Pareto(1, $\gamma$) distribution with exponent $\gamma > 1$, where $\gamma$ is dependent on the field of expertise \cite{kleinberg2018selection}. Motivated by these applications, we consider the following group statistics:

\vskip -0.125cm
\textbf{\textbullet \quad Group 1:} $X_{k,n}\sim\mbox{Pareto}(1,1.2)$ and $R_{k,n}(t) = X_{k,n}^{0.6}\cdot\mathbb{I}\{X_{k,n}\leq t\}$
\vskip -0.125cm    
    \textbf{\textbullet \quad Group 2:} $X_{k,n}\sim\mbox{Pareto}(1,1.4)$ and $R_{k,n}(t) = X_{k,n}^{0.2}\cdot\mathbb{I}\{X_{k,n}\leq t\}$
\vskip -0.125cm
The reward per processing time as a function of the deadline is shown in Figure \ref{fig:performance}. Note that the optimal deadline improves the reward per unit processing time.
\vspace{-0.1cm}
\begin{figure}[h]
    \centering
    \includegraphics[scale=0.4]{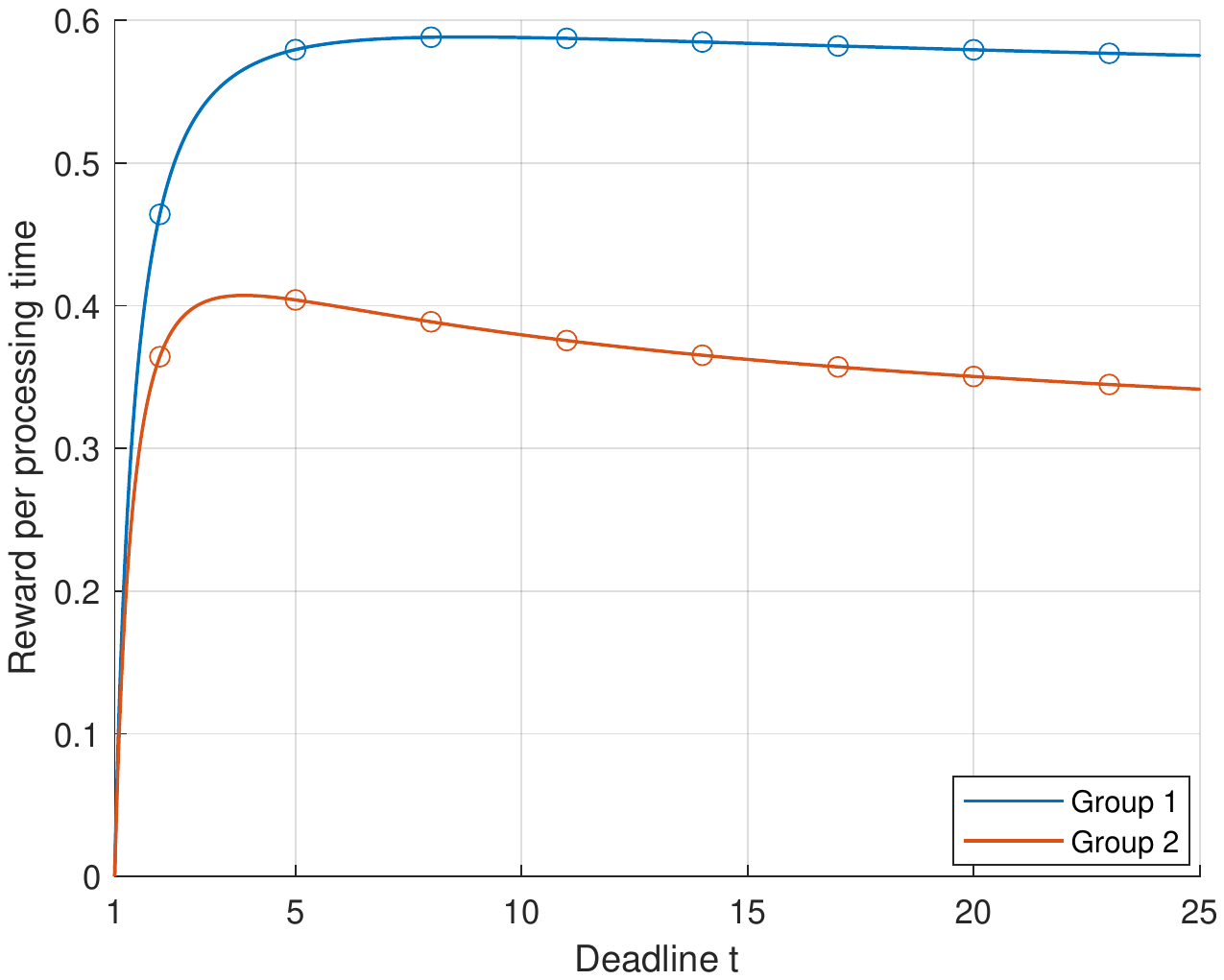}
    \hspace{0.8in}
    \includegraphics[scale=0.4]{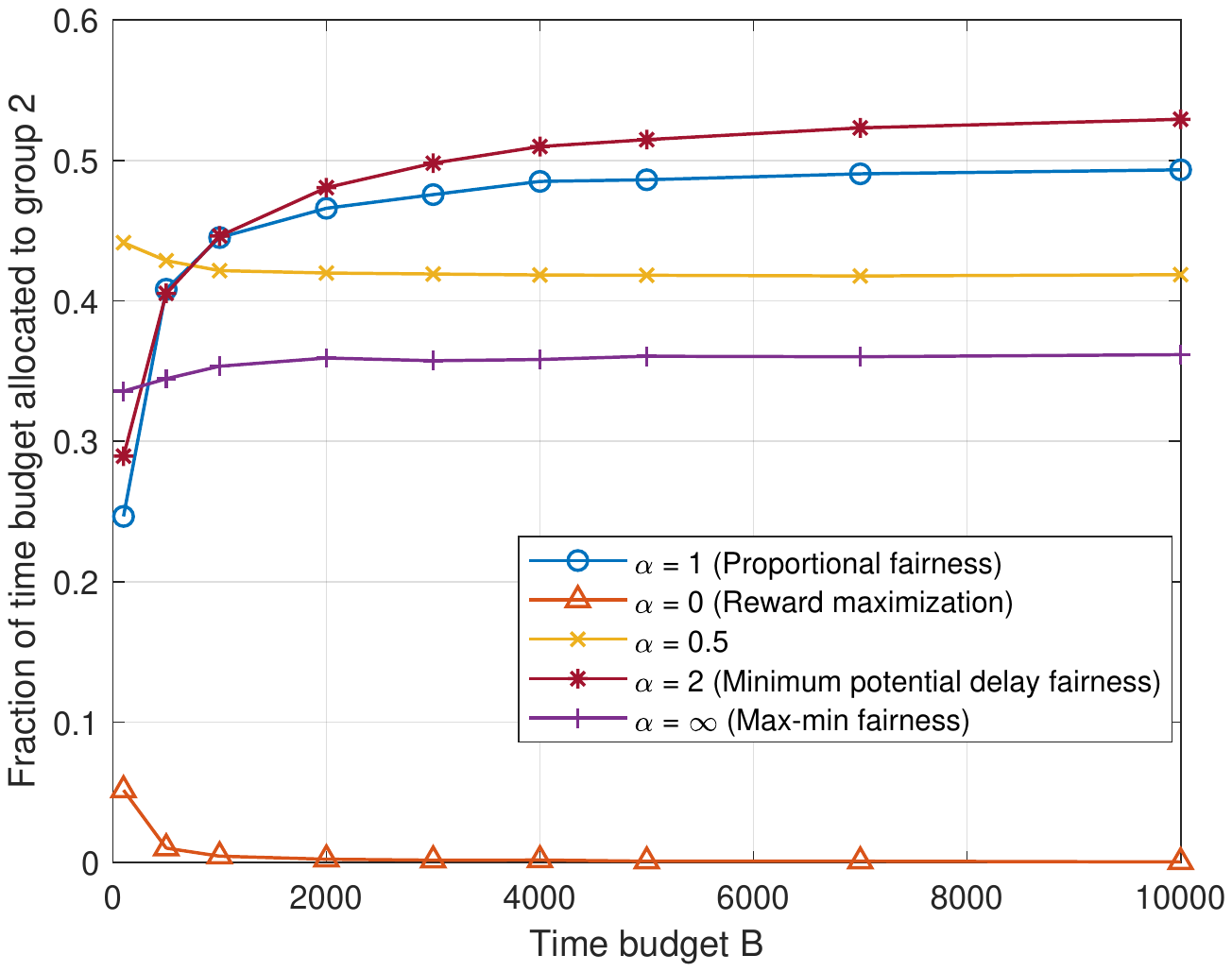}
    \caption{\footnotesize(Left) Reward per processing time for each group. (Right) Fraction of time budget assigned to Group-2 individuals under the {\tt OLUM} Algorithm for various fairness criteria.}
    \label{fig:performance}
\end{figure}
For this setting, we implemented the {\tt OLUM} Algorithm with parameter $V=20$, and considered $\alpha$-fair resource allocation problems with various $\alpha$ values. In Figure \ref{fig:performance}, we present the simulation results for $\varphi_2$, i.e., the average fraction of time budget $B$ allocated to Group-2 individuals, under the {\tt OLUM} Algorithm. For these experiments, we chose $w_k = 1$ for $k=1,2$ and ran the {\tt OLUM} Algorithm for 1000 trials for each set. Note that the optimal reward per processing time of Group-1 individuals is higher than that of Group-2 individuals, thus Group-1 is chosen for reward maximization. Under proportional fairness, the time budget is equally distributed between Group-1 and Group-2 individuals. We observe from Figure \ref{fig:performance} that the {\tt OLUM} Algorithm converges to the optimal operating points very fast, which verifies the theoretical results we presented.   

\section{Conclusion}
In this paper, we proposed a versatile and comprehensive framework for continuous-time online resource allocation with fairness considerations, and proposed a no-regret learning algorithm for this problem in a delayed full-information feedback model. Note that although the full-information feedback is available in many application scenarios, there are cases in which the controller does not have an access to full feedback, thus a mechanism that incorporates bandit feedback is required. The online learning framework introduced in this paper can be extended to bandit feedback. One way to achieve this might be to replace the empirical estimates with upper confidence bounds in the {\tt OLUM} Algorithm, which makes the analysis even more complicated. We leave the design and analysis of bandit algorithms in this setting as a future work.

\newpage
\section*{Broader Impact}
Our work develops the theory of fair online learning, specifically analyzing the impact of reward-maximizing allocation policies on opportunities for different groups of people. Our proposal analyzes the trade-offs across various allocation policies (ranging from profit maximizing to equal opportunity for all), thus highlighting the choice of objectives that the controllers should carefully consider. This work does not have any foreseeable negative ethical or societal impact. 



\bibliographystyle{IEEEtran}
\bibliography{neurips-main.bib}




\newpage
\appendix
\section{Related Work}\label{app:related-work}
Fair resource allocation via utility maximization has been widely studied in economics \cite{nash1950bargaining, pratt1978risk, kahneman2006anomalies}, mechanism design \cite{nisan2007computationally}, network management \cite{palomar2006tutorial, mo2000fair, eryilmaz2007fair, srikant2013communication, kelly1997charging} among many other fields. Particularly, logarithmic utility maximization was introduced in \cite{nash1950bargaining} for the "Nash bargaining solution" to a bargaining game among multiple players over the allocation of a shared resource, and it was used in the management of communication networks in \cite{mo2000fair}. As a unifying framework, the class of $\alpha$-fair (also known as "isoelastic") utility functions was proposed for fair allocation in economics in \cite{pratt1978risk}. 
The main methodology for fair resource allocation in time-varying dynamical systems, akin to the system considered here, is Lyapunov drift analysis. Lyapunov drift has been used as a fundamental design and analysis tool in many problems including the wireless scheduling problem \cite{tassiulas1992jointly, tassiulas1993dynamic}, fair resource allocation among competing users \cite{eryilmaz2007fair, srikant2013communication}, stochastic game theory \cite{neely2013lyapunov}. Based on Lyapunov-drift methods, stochastic dynamic optimization algorithms by using the so-called drift-plus-penalty method were widely used in queueing and networking problems (see \cite{neely2010stochastic} and references therein). The existing Lyapunov optimization methods are predominantly opportunistic, which means that the random quantities (such as completion time, reward, system state) arrive prior to the decision-making at each stage, or they assume the knowledge of the first- and second-order statistics of these random quantities. These assumptions are not satisfied in many applications as we discussed in Section \ref{sec:intro}, therefore the controller must learn the statistics so as to maximize the objective function, such as the total utility. To the best of our knowledge, our paper is the first learning theory approach to the fair resource allocation problem based on Lyapunov drift. Even in the offline optimization setting, the Lyapunov optimization methods are predominantly in discrete-time setting, i.e., each action takes a unit time. The only continuous-time utility maximization approach to fair resource allocation is \cite{neely2010dynamic}, which assumes the knowledge of first-order statistics. Our paper improves some of the results of this paper in the offline optimization scenario (e.g., simplified decision rules, finite-time performance bounds), and extends these results for the online learning problem.  

The online learning problem under budget constraints has been considered in the bandits with knapsacks (BwK) framework \cite{badanidiyuru2018bandits}. In this extension of the classical stochastic bandit model, each action consumes a random amount of a resource from a common budget and yields a random reward, where the controller aims to maximize the expected total reward by until a resource is completely depleted. BwK model has been considered under various dynamics \cite{badanidiyuru2018bandits, agrawal2016linear, sankararaman2017combinatorial, tran2012knapsack}.  In \cite{cayci2019learning}, an interrupt/deadline mechanism is employed to incorporate the continuous-time dynamics into the budget-constrained online learning model. For a detailed discussion of the BwK and its extensions, please refer to \cite{slivkins2019introduction}. The original BwK models study the reward maximization problem. In \cite{agrawal2014bandits}, the authors consider an online learning setting where the objective is to maximize a concave function subject to convex constraints. In \cite{agrawal2014bandits}, the decision-making process continues for a \textit{fixed} number of stages, and the constraints are not always satisfied unlike our model. Instead, the distance to the constraint set, as well as the regret, is shown to vanish in expectation under the proposed learning algorithms, which require solving linear programs at each stage. Another crucial difference is that the deadline mechanism for improving time-efficiency is not incorporated into the decision in \cite{agrawal2014bandits}. Our paper deviates from this line of work as it proposes a versatile and comprehensive framework for fairness, and incorporates continuous-time dynamics into the decision-making for time-efficiency under strict time constraints. To solve this problem, we propose a learning algorithm with low computational complexity, and prove its efficiency. The design and analysis methodology we followed in this paper based on Lyapunov optimization can be used in many other problem models.

\section{Proofs of Proposition \ref{prop:asy-optimality} and Proposition \ref{prop:rand-policy} 
}
\label{app:rand-policy}
\begin{proof}[Proof of Proposition \ref{prop:rand-policy}]
Fix any {(group, deadline) decision} $(k, t)\in[K]\times\mathbb{T}$, and consider the stationary policy $\pi = \pi(P)$ with an arbitrary probability distribution $P$. Let the number of $(k,t)$ decisions in $[0,B]$ be defined as $$N^{(k,t)}_{\pi}(B) = \sum_{n=1}^{N_{\pi(P)}(B)}\mathbb{I}\{\pi_n = (k,t)\}.$$ Since each decision is made independently according to the same probability distribution, the number of tasks between two consecutive tasks for which the decision is $(k,t)$ is iid, which implies that $N^{(k,t)}_{\pi}(B)$ is a regenerative process \cite{asmussen2008applied}. Therefore, we can compute the total reward gathered from tasks for which the decision-pair is $(k,t)$ by using renewal theory. In order to accomplish this, we will compute the mean length of a regenerative cycle for each decision $(k,t)$, and then use the renewal theory for tight bounds.


Without loss of generality, consider a regenerative cycle from the beginning (time 0) to the completion of the first task where the decision-pair is $(k,t)$, thus each regenerative cycle contains exactly one task for which the decision-pair is $(k,t)$. Then, for the random variable $M = \sup\{n\geq 0:\pi_n(P)\neq (k,t)\}$, the number of tasks in a regenerative cycle is $M+1\sim Geo\big(P(k,t)\big)$. This construction implies that $\{M=0\}=\{\pi_1(P) = (k,t)\}$ and $\{M=m\} = \bigcap_{i=1}^m\{\pi_i(P)\neq (k,t)\}\cap\{\pi_{M+1}(P)=(k,t)\}$ for $m>1$ under the stationary randomized policy $\pi(P)$. Therefore, the length of the regenerative cycle (i.e., the time interval in which there is exactly one completed task with decision-pair $(k,t)$) is as follows:
$$Y = \sum_{n=1}^{M}\sum_{(k^\prime,t^\prime)\neq (k,t)}\mathbb{I}\{\pi_n = (k^\prime,t^\prime)\}(X_{k^\prime,n}\wedge t^\prime)+(X_{k,M}\wedge t),$$ where $x\wedge y = \min\{x,y\}$ for any two real numbers $x,y$. Note that $Y$ is a stopped random walk with non-i.i.d. increments and a controlled stopping time $M+1$. We will compute the expectation of this quantity first. By iterated expectation, we have the following equality:
    \begin{equation}
        \mathbb{E}[Y] = \sum_{n_0=0}^\infty\mathbb{P}(M=n_0)\mathbb{E}[Y|M=n_0].
        \label{eqn:it-expectation}
    \end{equation}
Note that for any $n_0\geq 0$, we have: $$\mathbb{E}[\mathbb{I}\{\pi_n = (k^\prime,t^\prime)\}|M=n_0] = \mathbb{P}(\pi_n = (k^\prime,t^\prime)|\pi_n\neq (k, t)) = \frac{P(k^\prime,t^\prime)}{1-P(k,t)},$$ for all $n\leq n_0$. Therefore, we have the following identity:
    \begin{equation}
        \mathbb{E}[Y|M=n_0] = n_0\sum_{(k^\prime,t^\prime)\neq (k,t)}\frac{P(k^\prime,t^\prime)\mu(k^\prime,t^\prime)}{1-P(k,t)} + \mu(k,t), ~~\forall n_0 \leq 0,
    \end{equation}
    \noindent where $\mu(k,t) = \mathbb{E}[X_{k,1}\wedge t]$. Thus, we have the following:
    \begin{align*}
        \mathbb{E}[Y] &= \sum_{n_0=0}^\infty\mathbb{P}(M=n_0)n_0\sum_{(k^\prime,t^\prime)\neq (k,t)}\frac{p(k^\prime,t^\prime)\mu(k^\prime,t^\prime)}{1-p(k,t)} + \mu(k,t), \\
        &= \mathbb{E}[M] \sum_{(k^\prime,t^\prime)\neq (k,t)}\frac{p(k^\prime,t^\prime)\mu(k^\prime,t^\prime)}{1-p(k,t)} + \mu(k,t).
    \end{align*}
    \noindent from \eqref{eqn:it-expectation}. Since $M+1\sim Geo(P(k,t))$, we have $\mathbb{E}[M] = \frac{1}{p(k,t)}-1$. Substituting this into the above identity, we find the expected length of a regenerative cycle under $\pi(P)$ as follows: $$\mathbb{E}[Y]=\frac{\sum_{k^\prime,t^\prime)\neq(k,t)}P(k^\prime,t^\prime)\mu(k^\prime,t^\prime)}{P(k,t)}.$$
    
    In summary, a decision-pair $(k,t)$ is chosen once in a cycle of $Y$ time units, and yields a reward $R_{k,n}(t)$ under the stationary randomized policy $\pi(P)$. Having specificed mean length of a regenerative cycle and mean reward, we can now compute the reward rate (i.e., reward per unit time) for a decision-pair $(k,t)$ under $\pi(P)$ as follows:
    \begin{align*}
       r_k(t) = \frac{\mathbb{E}[R_{k,1}(t)]}{\mathbb{E}[Y]} =\frac{P(k,t)\mathbb{E}[R_{k,1}(t)]}{\sum_{(i,t^\prime)\in[K]\times\mathbb{T}}P(i,t^\prime)\mathbb{E}[\min\{X_{i,1}, t^\prime\}]}.
    \end{align*}
    As an immediate consequence, the reward per unit time for group $k$ under $\pi(P)$ is as follows:
    $$\rho_k(P) = \sum_{t\in\mathbb{T}}r_k(t).$$ {As a consequence of the elementary renewal theorem \cite{gut2009stopped}, the total reward for group $k$ under $\pi(P)$ in $[0,B]$ is $B\rho_k(P)+o(B)$}. In order to get tight bounds, we use Lorden's inequality to obtain the following inequalities:  $$B\rho_k(P) \leq \sum_{n=1}^{N_\pi(B)}\sum_{t\in\mathbb{T}}\mathbb{I}\{\pi_n = (k,t)\}R_{k,n}(t) \leq B\rho_k(P) + C(k,t),$$ for a constant $C(k,t) < \infty$ since $Var(X_{k,n}\wedge t)<\infty$ and $Var(R_{k,n}(t))<\infty$ for all $t\in\mathbb{T}$ \cite{asmussen2008applied}.
    Therefore, $$\rho_k(P) \leq \overline{r}_k^\pi(B) \leq \rho_k(P) + \frac{C(k,t)}{B}.$$ Since $U_k$ is continuously differentiable and concave, we have the following result: $$U_k(\rho_k(P)) \leq U_k(\overline{r}_k^\pi(B)) \leq U_k(\rho_k(P)) + U_k^\prime\Big(\rho_k(P)\Big)\frac{C(k,t)}{B},$$ which concludes the proof.
    \end{proof}

    \begin{proof}[Proof of Proposition \ref{prop:asy-optimality}]
    First, we will show an approximation to the optimization problem in \eqref{eqn:objective} based on Jensen's inequality as in \cite{neely2010dynamic}.
\begin{lemma}\label{lemma:jensen}
    For any $k\in[K]$, $n\geq 1$ and a causal policy $\pi$ for choosing $(G_n, T_n, (\gamma_{k,n})_{k\in[K]})$, let
    \begin{align}
        \widetilde{X}_{\pi_n,n} &= \min\{X_{G_n,n},T_n\},\\
        Z_{\pi_n,n} &= \min\{X_{G_n,n},T_n\}\sum_{m=1}^KU_m(\gamma_{m,n}),\\
        Y_{\pi_n,m,n} &= \min\{X_{G_n,n},T_n\}\gamma_{m,n}-R_{m,n}(T_n)\mathbb{I}\{G_n=m\},~\forall m\in[K].
    \end{align}
    Let $U^*$ be the solution to the following optimization problem:
    \begin{align}\label{eqn:equivalent-opt}
        \max_{\pi\in\Pi_A} \quad \lim_{N\rightarrow\infty}\frac{\sum_{n=1}^{N}\mathbb{E}[Z_{\pi_n,n}]}{\sum_{n=1}^{N}\mathbb{E}[\widetilde{X}_{\pi_n,n}]}\quad \mbox{s.t.}\quad \lim_{N\rightarrow\infty}\frac{\sum_{n=1}^{N}\mathbb{E}[Y_{\pi_n,m, n}]}{\sum_{n=1}^{N}\mathbb{E}[\widetilde{X}_{\pi_n,n}]} \leq 0,~\forall m=1,2,\ldots,K.
    \end{align}
    where the maximization is over $\Pi_A$, the set of all causal policies. Then, we have the following result: $$\lim_{B\rightarrow\infty}{\tt OPT}_{\Pi_A}(B) = U^*.$$
    
\end{lemma}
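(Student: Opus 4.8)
The plan is to establish the equality in two stages: first rewrite $\lim_{B\to\infty}{\tt OPT}_{\Pi_A}(B)$ as a maximization of $\sum_k U_k(\bar r_k)$ over the limiting per-group reward rates, and then show this equals $U^\ast$ via a pair of inequalities, the nontrivial one of which is exactly where Jensen's inequality enters. For a fixed causal policy $\pi$, the total time consumed in $[0,B]$ equals $B+O(1)$ by the stopping rule defining $N^\pi(B)$, while the group-$k$ reward is $\sum_{n=1}^{N^\pi(B)}\mathbb{I}\{G_n=k\}R_{k,n}(T_n)$. Writing $\overline r_k^\pi(B)$ as the ratio of group-$k$ reward to $B$ and invoking the stopped-sum/elementary-renewal reasoning underlying the proof of Proposition \ref{prop:rand-policy} (together with $X_{k,n}>0$ a.s., so that $\widetilde X_{\pi_n,n}>0$), I obtain
$$\lim_{B\to\infty}\overline r_k^\pi(B) = \bar r_k(\pi) := \frac{\lim_{N\to\infty}\frac{1}{N}\sum_{n=1}^N\mathbb{E}[R_{k,n}(T_n)\mathbb{I}\{G_n=k\}]}{\lim_{N\to\infty}\frac{1}{N}\sum_{n=1}^N\mathbb{E}[\widetilde X_{\pi_n,n}]},$$
so that, by continuity of each $U_k$, $\lim_{B\to\infty}{\tt OPT}_{\Pi_A}(B)=\max_{\pi\in\Pi_A}\sum_k U_k(\bar r_k(\pi))$.

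For the easy direction $U^\ast \ge \max_\pi \sum_k U_k(\bar r_k(\pi))$, given any $\pi$ I would take the auxiliary variables to be the constants $\gamma_{m,n}=\bar r_m(\pi)$. Then $\sum_n\mathbb{E}[Z_{\pi_n,n}]/\sum_n\mathbb{E}[\widetilde X_{\pi_n,n}]$ collapses to $\sum_m U_m(\bar r_m(\pi))$, and each constraint ratio becomes $\bar r_m(\pi)-\sum_n\mathbb{E}[R_{m,n}(T_n)\mathbb{I}\{G_n=m\}]/\sum_n\mathbb{E}[\widetilde X_{\pi_n,n}]$, which tends to $\bar r_m(\pi)-\bar r_m(\pi)=0$. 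Hence $(\pi,\gamma)$ is feasible for \eqref{eqn:equivalent-opt} and attains objective value $\sum_m U_m(\bar r_m(\pi))$; optimizing over $\pi$ gives the inequality.

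For the reverse direction $U^\ast \le \max_\pi\sum_k U_k(\bar r_k(\pi))$, I would fix any feasible $(\pi,\gamma)$ and define the processing-time-weighted average $\bar\gamma_m := \sum_n\mathbb{E}[\widetilde X_{\pi_n,n}\gamma_{m,n}]/\sum_n\mathbb{E}[\widetilde X_{\pi_n,n}]$. Since $\widetilde X_{\pi_n,n}>0$, the normalized weights $\mathbb{E}[\widetilde X_{\pi_n,n}\,\cdot\,]/\sum_n\mathbb{E}[\widetilde X_{\pi_n,n}]$ define a genuine probability measure on the joint index–outcome space, and applying Jensen's inequality to the concave $U_m$ under this measure yields
$$\frac{\sum_n\mathbb{E}[\widetilde X_{\pi_n,n}U_m(\gamma_{m,n})]}{\sum_n\mathbb{E}[\widetilde X_{\pi_n,n}]} \le U_m(\bar\gamma_m).$$
The feasibility constraint $\lim_N\sum_n\mathbb{E}[Y_{\pi_n,m,n}]/\sum_n\mathbb{E}[\widetilde X_{\pi_n,n}]\le 0$ rearranges exactly to $\bar\gamma_m\le\bar r_m(\pi)$, so monotonicity of $U_m$ gives $U_m(\bar\gamma_m)\le U_m(\bar r_m(\pi))$. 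Summing over $m$ bounds the objective of $(\pi,\gamma)$ by $\sum_m U_m(\bar r_m(\pi))\le\max_\pi\sum_m U_m(\bar r_m(\pi))$, and taking the supremum over feasible pairs closes the loop.

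The hard part, where I would spend the most care, is the two limit-interchange steps: identifying $\lim_B\overline r_k^\pi(B)$ with the per-task ratio uniformly enough to exchange $\lim_B$ with the outer maximization over $\Pi_A$, and ensuring the time-average limits appearing in \eqref{eqn:equivalent-opt} genuinely exist — otherwise one must phrase everything with $\liminf$/$\limsup$ and re-verify that the Jensen bound and the constraint rearrangement survive. The Jensen step itself is clean once positivity $\widetilde X_{\pi_n,n}>0$ legitimizes the weighting measure, but the possible correlation between $\gamma_{m,n}$ and $\widetilde X_{\pi_n,n}$ (both history-dependent) is precisely why the two expectations cannot be separated and Jensen must be carried out with respect to the weighted measure rather than ordinary expectation.
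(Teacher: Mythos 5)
Your proposal is correct and takes essentially the same route as the paper: both reduce $\lim_{B\to\infty}{\tt OPT}_{\Pi_A}(B)$ to the per-task expectation ratio via the stopped-sum bracketing $\sum_{n=1}^{N^\pi(B)-1}\widetilde X_{\pi_n,n} < B \le \sum_{n=1}^{N^\pi(B)}\widetilde X_{\pi_n,n}$ together with boundedness of $\widetilde X_{\pi_n,n}$, and then pass to the auxiliary-variable problem \eqref{eqn:equivalent-opt} via the extended Jensen's inequality. The only difference is that the paper invokes that final step as a black box (Lemma 7.6 of \cite{neely2010stochastic}), whereas you prove it inline --- constant auxiliary variables $\gamma_{m,n}=\bar r_m(\pi)$ for the achievability direction, and the length-biased (processing-time-weighted) Jensen bound plus monotonicity of $U_m$ for the converse --- which is precisely the content of the cited lemma, so your argument is a self-contained version of the paper's proof rather than a different one.
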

\begin{proof}
    First, under any policy $\pi\in\Pi_A$, the following holds by the definition of $N^\pi(B)$: $$\sum_{n=1}^{N^\pi(B)-1}\widetilde{X}_{\pi_n,n} < B \leq \sum_{n=1}^{N^\pi(B)}\widetilde{X}_{\pi_n,n}.$$ Since $\widetilde{X}_{\pi_n,n}$ is bounded for all $n$, we have the following:
    \begin{equation}
        \lim_{B\rightarrow\infty}\overline{r}_k^\pi(B) = \lim_{B\rightarrow\infty} \frac{\mathbb{E}\Big[ \sum_{n=1}^{N_\pi(B)}\mathbb{I}\{G_n=k\}R_{k,n}(T_n)\Big]}{\mathbb{E}\Big[ \sum_{n=1}^{N_\pi(B)}\widetilde{X}_{\pi_n,n}\Big]}
        \label{eqn:asy-rew-rate}
    \end{equation}
    By using the asymptotic equality in \eqref{eqn:asy-rew-rate}, continuity of $U_k$, and a direct application of the extended Jensen's inequality (see Lemma 7.6 in \cite{neely2010stochastic}), we have $\lim_{B\rightarrow\infty}{\tt OPT}_{\Pi_A}(B) = U^*$. This enables us to convert the utility maximization problem into a constrained optimization for time-averages.
\end{proof}
Now, we will prove the following: $$\max_{P}\sum_{k\in[K]}U_k\big(\rho_k(P)\big) = U^*.$$ Since $U^*$ is optimal asymptotic total utility over $\Pi_A \supset \Pi_S$, we have the following inequality: $$\max_{P}\sum_{k\in[K]}U_k\big(\rho_k(P)\big) \leq U^*.$$ By using \eqref{eqn:asy-rew-rate}, a direct application of Lemma 1 in \cite{neely2010dynamic} implies that there exists an SRP $\pi(P_0)$ that achieves $U^*$. Proposition \ref{prop:rand-policy} implies that $$\sum_kU_k(\overline{r}_k^{\pi(P^\prime)}(B)) \leq \max_{P}~\sum_kU_k(\rho_k(P)) + O(1/B),$$ for any $P^\prime$ and $B>0$. Thus, we have: $$U^*=\lim_{B\rightarrow\infty}\sum_kU_k\Big(\overline{r}_k^{\pi(P_{0})}(B)\Big)\leq \max_{P}~\sum_kU_k(\rho_k(P)),$$ which implies $U^*=\max_{P}~\sum_kU_k(\rho_k(P))$.
    \end{proof}

\section{Proof of Proposition \ref{prop:optimal-deadline}}\label{app:optimal-deadline}
Let $\mu(k,t) = \mathbb{E}[\min\{X_{k,1},t\}]$ and $\theta(k,t) = \mathbb{E}[R_{k,1}(t)]$. For the optimal distribution $P^\star$, let $$C_k = \sum_{k^\prime\neq k}\sum_{t\in\mathbb{T}}P^\star(k^\prime,t)\mu(k^\prime, t),$$ $P^\star_k = [P^\star(k,t)]_{t\in\mathbb{T}}$ and $p_k = \sum_{t\in\mathbb{T}}P^\star(k,t)$. Then, since $U_k(x)$ is an increasing function of $x$, $P^\star_k$ is the solution to the following optimization problem:
\begin{align}
\begin{aligned}
    \max\limits_{P_k}~~\frac{\sum_{t}P_k(t)\theta(k,t)}{\sum_{t}P_k(t)\mu(k,t)+C_k}~~~\mbox{    subject to    }~~~ &P_k(t)\geq 0,\forall t,\\&\sum_tP_k(t) = p_k.
\end{aligned}
\label{eqn:opt-deadline-1}
\end{align}
Let $V^*$ be the optimum solution of \eqref{eqn:opt-deadline-1}, and $V(P_k) = \sum_{t}P_k(t)\theta(k,t)-V^*\Big(\sum_{t}P_k(t)\mu(k,t)+C_k\Big)$. Then, the following optimization problem is equivalent to \eqref{prop:optimal-deadline}:
\begin{align}
\begin{aligned}
    \max\limits_{P_k}~~V(P_k)~~~\mbox{    subject to    }~~~ &P_k(t)\geq 0,\forall t,\\&\sum_tP_k(t) = p_k,
\end{aligned}
\label{eqn:opt-deadline-2}
\end{align}
\noindent which, in turn, yields $P_k^\star$. For any $t\in\mathbb{T}$, we have $\frac{\partial V}{\partial P_k(t)} = \theta(k,t)-V^*\mu(k,t)$. Let $$d^* = \max_t~\frac{\partial V(P_k)}{\partial P_k(t)}\Big|_{P_k=P_k^\star}.$$ By the optimality of $P_k^\star$, if $P_k^\star(t)>0$, then we must have $\partial V(P_k^\star)/\partial P_k(t) = d^*$, which further implies that 
\begin{equation}P_k^\star(t) > 0 \Rightarrow \theta(k,t) = d^* + V^*\mu(k,t).\label{eqn:opt-deadline-3}\end{equation} Let $t_1\leq t_2\leq \ldots \leq t_m$ be the set of deadlines such that $P_k^\star(t_i) > 0$. There exists a $\beta \in [0,1]$ such that the following holds: $$\sum_t P_k^\star(t)\theta(k,t) = p_k\Big(\beta\theta(k,t_1) + (1-\beta)\theta(k,t_m)\Big).$$ In conjunction with \eqref{eqn:opt-deadline-3}, this implies that: $$\sum_t P_k^\star(t)\mu(k,t) = p_k\Big(\beta\mu(k,t_1) + (1-\beta)\mu(k,t_m)\Big).$$ Hence, we have shown that $P_k^\star$ makes a randomization between at most two deadlines, which simplifies \eqref{eqn:opt-deadline-2} considerably as a function of a single variable $\beta\in[0,1]$. Rewriting \eqref{eqn:opt-deadline-2} in terms of $\beta$ and taking the derivative with respect to $\beta\in[0,1]$, we observe that the objective function is either monotonically decreasing or increasing with $\beta$. Therefore, $P_k^\star$ has only one non-zero element, i.e., the deadline decision is made deterministically for group $k$.

\section{Proof of Proposition \ref{prop:optimal-utility}}\label{app:opt-utility}
By Proposition \ref{prop:optimal-deadline}, for each group $k$, there is a unique optimal deadline $t_k^*$. Let $$r_k^* = \frac{\E[R_{k,n}(t_k^*)]}{\E[\min\{X_{k,n},t_k^*\}]},$$ be the reward per processing time for group $k$ under the optimal deadline selection. Then, by Proposition \ref{prop:rand-policy}, we can express the reward per unit time as follows: $$\rho_k(P) = r_k^*\widehat{\varphi}_k(P),$$ where $$\widehat{\varphi}_k(P) = \frac{P(k,t_k^*)\E[\min\{X_{k,n},t_k^*\}]}{\sum_{j\in[K]}P(j,t_j^*)\E[\min\{X_{j,n},t_j^*\}]},$$ is the fraction of time allocated to group $k$ under $\pi(P)$. Note that for any $P$, $\{\widehat{\varphi}_k(P):k\in[K]\}$ defines a probability distribution in the $K$-dimensional simplex. Therefore, by Proposition \ref{prop:rand-policy}, the asymptotically optimal utility is the solution to the following optimization problem:
\begin{align}
    \begin{aligned}
    \max_{\varphi\in\mathbb{R}_+^K}~\sum_{k\in[K]}U_k(r_k^*\widehat{\varphi}_k)~~~\mbox{s.t.}~~~&\sum_{k\in[K]}\widehat{\varphi}_k = 1,\\&\widehat{\varphi}_k\geq 0,~\forall k\in[K].
    \end{aligned}
    \label{eqn:opt-problem}
\end{align}
The Lagrangian function associated with \eqref{eqn:opt-problem} is as follows: $$\mathcal{L}(\widehat{\varphi}, \lambda) =  \sum_{k\in[K]}U_k(r_k^*\widehat{\varphi}_k)-\lambda\Big(\sum_{k\in[K]}\widehat{\varphi}_k - 1\Big).$$ Since $U_k$ is a monotonically increasing and continuously differentiable function for all $k$, by solving $\frac{\partial \mathcal{L}}{\partial \widehat{\varphi}_k}=0,$ we obtain $\widehat{\varphi}_k = (U_k^\prime)^{-1}(\lambda/r_k^*)$. As $U_k$ is concave for all $k$, the proof follows by applying KKT conditions.

\section{Proof of Theorem \ref{thm:regret}}\label{app:regret}
The proof of Theorem \ref{thm:regret} consists of two steps. In the first step, we analyze the performance of the {\tt OLUM} Algorithm for the constrained optimization of time averages for any number of trials $N$ by using a drift-based dual ascent optimization methodology \cite{neely2010dynamic, neely2010stochastic, neely2013lyapunov}. In the second step, we show that the number of tasks processed in $[0,B]$ is $O(B)$ with high probability to prove the regret result. 

The following concentration inequality will be used extensively throughout the proof.
\begin{lemma}[\cite{cayci2020budget, cayci2019learning}]
    Let $X_n$ and $R_n$ be two sub-Gaussian random processes with means $\E[X]>0$, $\E[R]$, and parameters $\sigma_X^2$ and $\sigma_R^2$, respectively. Then, for any $\epsilon \in (0,\E[X])$, we have the following:
    \begin{equation}
        \mathbb{P}\Big(\Big|\frac{\sum_{i=1}^nR_i}{\sum_{i=1}^nX_i}-\frac{{E}[R]}{{E}[X]}\Big| > \frac{\epsilon(1+r)}{\mu}\Big) \leq 2\big(e^{-n\epsilon^2/\sigma_X^2}+e^{-n\epsilon^2/\sigma_R^2}\big),
    \end{equation}
    for any $r > \frac{\E[R]}{\E[X]}$ and $\mu \leq \E[X]-\epsilon$.
    \label{lemma:concentration}
\end{lemma}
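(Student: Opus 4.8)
The plan is to reduce the ratio of sums to a ratio of sample means and then decouple the fluctuations of the numerator and the denominator, controlling each by its own sub-Gaussian concentration event. Writing $\bar R_n=\frac1n\sum_{i=1}^nR_i$ and $\bar X_n=\frac1n\sum_{i=1}^nX_i$, we have $\frac{\sum_iR_i}{\sum_iX_i}=\bar R_n/\bar X_n$, and a direct algebraic identity gives
\begin{equation}
\frac{\bar R_n}{\bar X_n}-\frac{\E[R]}{\E[X]}=\frac{(\bar R_n-\E[R])\E[X]-\E[R](\bar X_n-\E[X])}{\bar X_n\,\E[X]}.
\end{equation}
The idea is to bound the two centered fluctuations $\bar R_n-\E[R]$ and $\bar X_n-\E[X]$ separately, while using the concentration of $\bar X_n$ to keep the random denominator bounded away from zero.

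First I would define the two good events $A=\{|\bar X_n-\E[X]|\le\epsilon\}$ and $B=\{|\bar R_n-\E[R]|\le\epsilon\}$. On $A$, the hypothesis $\mu\le\E[X]-\epsilon$ gives the deterministic lower bound $\bar X_n\ge\E[X]-\epsilon\ge\mu>0$, which is exactly what prevents the ratio from blowing up. Applying the triangle inequality to the displayed identity and using $|\bar R_n-\E[R]|\le\epsilon$ and $|\bar X_n-\E[X]|\le\epsilon$ on $A\cap B$, together with the nonnegativity of the reward so that $|\E[R]|=\E[R]$, yields
\begin{equation}
\Big|\frac{\bar R_n}{\bar X_n}-\frac{\E[R]}{\E[X]}\Big|\le\frac{\epsilon\,\E[X]+\E[R]\,\epsilon}{\bar X_n\,\E[X]}=\frac{\epsilon\big(1+\E[R]/\E[X]\big)}{\bar X_n}\le\frac{\epsilon(1+r)}{\mu},
\end{equation}
where the last inequality uses $\bar X_n\ge\mu$ (from $A$) and the assumption $r>\E[R]/\E[X]$. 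Hence on $A\cap B$ the deviation never exceeds $\epsilon(1+r)/\mu$, so the bad event $\big\{|\bar R_n/\bar X_n-\E[R]/\E[X]|>\epsilon(1+r)/\mu\big\}$ is contained in $A^c\cup B^c$.

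It then remains only to bound $\mathbb{P}(A^c)$ and $\mathbb{P}(B^c)$ and apply a union bound. Since the $X_i$ are independent and sub-Gaussian with parameter $\sigma_X^2$, the centered sum $\sum_i(X_i-\E[X])$ is sub-Gaussian with parameter $n\sigma_X^2$, so a Chernoff tail bound gives $\mathbb{P}(A^c)=\mathbb{P}(|\bar X_n-\E[X]|>\epsilon)\le 2e^{-n\epsilon^2/\sigma_X^2}$ in the sub-Gaussian normalization of \cite{cayci2019learning, cayci2020budget}, and analogously $\mathbb{P}(B^c)\le 2e^{-n\epsilon^2/\sigma_R^2}$. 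A union bound over $A^c\cup B^c$ then produces
\begin{equation}
\mathbb{P}\Big(\Big|\frac{\sum_iR_i}{\sum_iX_i}-\frac{\E[R]}{\E[X]}\Big|>\frac{\epsilon(1+r)}{\mu}\Big)\le\mathbb{P}(A^c)+\mathbb{P}(B^c)\le 2\big(e^{-n\epsilon^2/\sigma_X^2}+e^{-n\epsilon^2/\sigma_R^2}\big),
\end{equation}
which is the claim; note that no independence between the $X$- and $R$-processes is needed, since the union bound tolerates dependence between $A$ and $B$.

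The main obstacle is conceptual rather than technical: the denominator $\bar X_n$ is random and could in principle be arbitrarily small, which is precisely why both the event $A$ and the hypothesis $\mu\le\E[X]-\epsilon$ are required to furnish the deterministic lower bound $\bar X_n\ge\mu$. Everything else is bookkeeping — the factor $(1+r)$ is matched using $r>\E[R]/\E[X]$ with nonnegative rewards, and reproducing the stated exponents $n\epsilon^2/\sigma_X^2$ and $n\epsilon^2/\sigma_R^2$ (rather than the textbook $n\epsilon^2/(2\sigma^2)$) amounts to adopting the sub-Gaussian-parameter convention of the cited references. Beyond the scalar Chernoff bound and a union bound, no further probabilistic machinery is needed.
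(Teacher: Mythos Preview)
The paper does not supply its own proof of this lemma; it is quoted directly from \cite{cayci2020budget, cayci2019learning} and used as a black box. Your argument is correct and is precisely the standard route: rewrite the ratio via the identity $\bar R_n/\bar X_n-\E[R]/\E[X]=\big((\bar R_n-\E[R])\E[X]-\E[R](\bar X_n-\E[X])\big)/(\bar X_n\,\E[X])$, localize to the high-probability event $\{|\bar X_n-\E[X]|\le\epsilon\}\cap\{|\bar R_n-\E[R]|\le\epsilon\}$ so that $\bar X_n\ge\E[X]-\epsilon\ge\mu$, and finish with two sub-Gaussian tail bounds and a union bound. Your observations about the role of $\mu\le\E[X]-\epsilon$ (to control the random denominator), of $r>\E[R]/\E[X]$ (to absorb the factor $1+\E[R]/\E[X]$), and of the sub-Gaussian normalization (exponent $n\epsilon^2/\sigma^2$ rather than $n\epsilon^2/(2\sigma^2)$) are all to the point.
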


Note that any bounded random variable $Z\in[0,a]$ is sub-Gaussian with parameter $\sigma^2 = a^2/4$ \cite{wainwright2019high}. As we are dealing with bounded $\min\{X_{k,n},t\}$ and $R_{k,n}(t)$, Lemma \ref{lemma:concentration} is an essential result for the proofs in this section.

In the second lemma, we provide an upper bound for the expectation of the dual variables $Q_n = (Q_{1,n},Q_{2,n},\ldots,Q_{K,n})$.

\begin{lemma}
    Consider the dual variables defined in \eqref{eqn:q-update} under the {\tt OLUM} Algorithm, and without loss of generality assume $Q_{k,0}=1$ for all $k$. Then, we have the following bound for any $n \geq 1$:
    \begin{equation}
        \E[\sum_{k=1}^KQ_{k,n}] \leq V\sum_{k\in[K]}U_k^\prime\Big(\frac{\min_{k,t}\mathbb{E}[R_{k,n}(t)]-\epsilon}{\max_{k\in[K]}\E[X_{k,n}]}\Big)+O(1/\epsilon),
    \end{equation}
    \noindent for any $V > 0$ and $\epsilon\in\big(0,\min_{k,t}\mathbb{E}[R_{k,n}(t)\big)$.
    \label{lemma:q-lengths}
\end{lemma}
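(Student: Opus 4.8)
The plan is to run a drift argument on the quadratic Lyapunov function $L(Q_n)=\frac12\sum_{k\in[K]}Q_{k,n}^2$, exploiting the self-regulating nature of the auxiliary rates $\gamma_k(n)=(U_k')^{-1}(Q_{k,n}/V)$. Writing $\widetilde X_n=\min\{X_{G_n,n},T_n\}$ and the net per-queue increment $a_{k,n}=\gamma_k(n)\widetilde X_n-R_{k,n}(T_n)\mathbb{I}\{G_n=k\}$, the update \eqref{eqn:q-update} reads $Q_{k,n+1}=(Q_{k,n}+a_{k,n})^+$, and the elementary inequality $((x)^+)^2\le x^2$ gives $L(Q_{n+1})-L(Q_n)\le\sum_k Q_{k,n}a_{k,n}+\frac12\sum_k a_{k,n}^2$. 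Since $\min\{X_{k,n},t\}$ and $R_{k,n}(t)$ are bounded and $\gamma_k(n)$ is a decreasing, hence controlled, function of $Q_{k,n}$, the second-order term is $O(1)$, so the conditional drift satisfies $\E[L(Q_{n+1})-L(Q_n)\mid\mathcal F_n]\le C+\sum_k Q_{k,n}\,\E[a_{k,n}\mid\mathcal F_n]$ for a constant $C$.

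The heart of the argument is a negative-drift-above-threshold property. Set $\beta=\frac{\min_{k,t}\E[R_{k,n}(t)]-\epsilon}{\max_k\E[X_{k,n}]}$. The key observation is a self-tuning threshold: once $Q_{k,n}>VU_k'(\beta)$, monotonicity of $(U_k')^{-1}$ forces $\gamma_k(n)<\beta$, so the expected arrival into queue $k$, namely $\gamma_k(n)\E[\widetilde X_n\mid\mathcal F_n]\le\gamma_k(n)\max_j\E[X_{j,n}]$, falls strictly below $\min_{k,t}\E[R_{k,n}(t)]-\epsilon$. The {\tt OLUM} rule meanwhile selects $(G_n,T_n)$ to maximize the $Q$-weighted reward rate $\widehat\theta_{k,n-\tau}(t)Q_{k,n}/\widehat\mu_{k,n-\tau}(t)$; combining this max-weight optimality with the slack $\epsilon$ shows that when all queues sit above their thresholds the first-order term $\sum_k Q_{k,n}\E[a_{k,n}\mid\mathcal F_n]$ is dominated by $-\epsilon\sum_k Q_{k,n}$ up to lower-order terms. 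The delayed empirical estimates are handled here by Lemma \ref{lemma:concentration}: on the high-probability event that $\widehat\theta$ and $\widehat\mu$ lie within $\epsilon$ of their means the realized decision is $\epsilon$-optimal, and the exponentially rare complementary events contribute only a bounded additive constant.

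Finally I would convert the drift into the stated moment bound. Taking total expectations and telescoping the inequality $\E[L(Q_{n+1})-L(Q_n)]\le C'-\epsilon\,\E[\sum_k Q_{k,n}]$, valid once the queues exceed their thresholds while the sub-threshold regime contributes the additive offset $V\sum_k U_k'(\beta)$, and then rearranging, yields $\E[\sum_k Q_{k,n}]\le V\sum_k U_k'(\beta)+C'/\epsilon$, which is exactly the claim with $O(1/\epsilon)=C'/\epsilon$. I expect the main obstacle to be the second step: rigorously coupling the \emph{joint} argmax decision rule --- which depends on all queue lengths simultaneously and on delayed, noisy estimates --- to the \emph{per-queue} drift, i.e.\ ensuring that the group whose normalized queue most exceeds its threshold actually receives enough service, while charging the estimation error cleanly to the slack $\epsilon$ through the concentration inequality.
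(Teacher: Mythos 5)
Your core mechanism is the same one the paper relies on: by monotonicity of $(U_k^\prime)^{-1}$, once $Q_{k,n} > VU_k^\prime(\beta)$ with $\beta=\frac{\min_{k,t}\E[R_{k,n}(t)]-\epsilon}{\max_k\E[X_{k,n}]}$, the auxiliary rate $\gamma_k(n)$ falls below $\beta$ and the expected arrivals are capped. However, the machinery you wrap around it has a genuine gap. The quadratic-Lyapunov drift bound you need, $\sum_kQ_{k,n}\E[a_{k,n}\mid\mathcal{F}_n]\le-\epsilon\sum_kQ_{k,n}$, is false: conditioned on the history, $\E[a_{k,n}\mid\mathcal{F}_n]=\gamma_k(n)\,\E[\min\{X_{G_n,n},T_n\}\mid\mathcal{F}_n]-\mathbb{I}\{G_n=k\}\,\E[R_{k,n}(T_n)\mid \mathcal{F}_n]$, and only the single selected group $G_n$ ever receives service. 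Above the thresholds the best available estimate is
$$\sum_kQ_{k,n}\E[a_{k,n}\mid\mathcal{F}_n]\;\le\;-\epsilon\sum_kQ_{k,n}\;+\;\Big(\min_{k,t}\E[R_{k,n}(t)]\Big)\sum_{k\neq G_n}Q_{k,n},$$
and the leftover positive term is of the same order as $\sum_kQ_{k,n}$ whenever $K\ge 2$, so it swamps the $-\epsilon\sum_kQ_{k,n}$ term. Max-weight optimality of {\tt OLUM} cannot rescue this: whichever $k$ maximizes $Q_k\widehat{\theta}_k/\widehat{\mu}_k$, the other $K-1$ queues remain unserved at that stage. This is precisely why the paper runs the drift on the \emph{linear} function $\sum_kQ_{k,n}$ instead: there the arrivals enter unweighted, the drift is at most $\max_k\E[X_{k,n}]\sum_k\gamma_k(n)-\E[R_{G_n,n}(T_n)\mid\mathcal{F}_n]\le-\epsilon$ above the threshold set, and this constant negative drift holds for \emph{any} decision rule --- so your appeals to the {\tt OLUM} selection rule and to the concentration bound of Lemma \ref{lemma:concentration} are not only insufficient but unnecessary; the lemma is a pure queue-stability fact.

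A second, independent gap is in the final conversion step: even granting a valid drift inequality, telescoping only bounds the time average $\frac1N\sum_{n=1}^N\E[\sum_kQ_{k,n}]$, whereas the lemma asserts --- and Step 1 of the regret proof uses, at the single time $n=N$ --- a bound on $\E[\sum_kQ_{k,n}]$ for every fixed $n$. The paper closes this by a different route: constant negative drift of $\sum_kQ_{k,n}$ above the threshold set, together with almost-surely bounded (hence sub-Gaussian) increments, lets it invoke Hajek's drift theorem to get exponential tail bounds uniform in $n$, which are then integrated via $\E[X\mathbb{I}\{X>a\}]=a\mathbb{P}(X>a)+\int_a^\infty\mathbb{P}(X>x)\,dx$ to produce the stated expectation bound with the $O(1/\epsilon)$ correction. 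To repair your argument you would need to (i) switch to the linear Lyapunov function so the drift estimate survives the fact that only one group is served, and (ii) replace telescoping with a drift-to-tail theorem of Hajek type (or an induction over $n$) to obtain a bound valid at each fixed $n$.
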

\begin{proof}
    For any $\epsilon\in\big(0,\min_{k,t}\mathbb{E}[R_{k,n}(t)\big)$, let $$A=\{q:\sum_kq_k \geq V\sum_{k\in[K]}U_k^\prime\Big(\frac{\min_{k,t}\mathbb{E}[R_{k,n}(t)]-\epsilon}{\max_{k\in[K]}\E[X_{k,n}]}\Big)+\max_t\overline{R}_{max}(t)\}.$$ Then, we have $\E[\sum_kQ_{k,n+1}-\sum_kQ_{k,n}|Q_n\in A] \leq -\epsilon$. Also, note that $Q_{k,n+1}-Q_{k,n}$ is bounded almost surely, i.e., sub-Gaussian. Thus, Theorem 2.3 in \cite{hajek1982hitting} implies the tail bounds for $\sum_kQ_{k,n}$, which implies the result via $\E[X\mathbb{I}\{X>a\}]=a\mathbb{P}(X>a)+\int_{a}^\infty \mathbb{P}(X>x)dx.$
\end{proof}

\textbf{Step 1:} Recall the equivalent form of the utility maximization problem in Lemma \ref{lemma:jensen}. In this step, we will prove the following result under the {\tt OLUM} Algorithm:
\begin{align*}
    \frac{\E[\sum_{n=1}^NZ_{\pi_n,n}]}{\E[\sum_{n=1}^N\widetilde{X}_{\pi_n,n}]} &\geq U^* - O\Big(\sqrt{\frac{\log(N)}{N}}+\frac{1}{V}\Big),\\
    \frac{\E[\sum_{n=1}^NY_{\pi_n,m,n}]}{\E[\sum_{n=1}^N\widetilde{X}_{\pi_n,n}]} &\leq O(V/N),~m=1,2,\ldots,K.
\end{align*}
for any $N$. This will be done by showing that the {\tt OLUM} Algorithm achieves $\epsilon$-optimal Lyapunov drift with high probability for each decision, thus achieves optimality fast as a result of the Lyapunov drift methodology. For details on Lyapunov optimization, refer to \cite{neely2010stochastic}. 

For any group $k\in[K]$, let
\begin{align*}
    X_{k,n}^* &= \min\{X_{G_n,n},t_k^*\},\\
    Z_{k,n}^* &= \min\{X_{k,n},t_{k}^*\}\sum_{m=1}^KU_m(\gamma_{m,n}),\\
    Y_{k,m,n}^* &= \min\{X_{k,n},t_{k}^*\}\gamma_{m,n}-R_{m,n}(t_{m}^*)\mathbb{I}\{k=m\},~\forall m\in[K].
\end{align*}
Note that these are the random variables in Lemma \ref{lemma:jensen} under the optimal deadline $t_k^*$ for each group $k$.

The proof relies on a novel online learning approach based on drift-based optimization techniques. In this methodology, the dual variables $Q_n$ as defined in \eqref{eqn:q-update} summarize how much the constraint is violated in the past. At stage $n$, given the vector of dual variables $Q_n$, we have the drift-plus-penalty ratio (DPPR), which is defined as follows:
\begin{equation}
    \Psi_n(k, Q_n) = -V\frac{\E[Z_{k,n}^*]}{\E[\min\{X_{k,n},t_{k}^*\}]}+\sum_mQ_{m,n}\frac{\E[Y_{k,m,n}^*]}{\E[\min\{X_{k,n},t_{k}^*\}]}.
    \label{eqn:dppr}
\end{equation}
The optimal algorithm therefore, aims to minimize the DPPR to achieve optimality. Let the terms in DPPR related to the auxiliary variables $\gamma_{m,n}$ be denoted as: 
\begin{equation}
    \psi_n(\gamma_n, Q_n) = \sum_{m=1}^K\big(-VU_m(\gamma_{m,n})+Q_{m,n}\gamma_{m,n}\big).
\end{equation}

Therefore, the DPPR can be written as follows:
\begin{equation}
    \Psi_n(k, Q_n) =\psi_n(\gamma_n,Q_n) -Q_{k,n}\frac{\E[R_{k,n}(t_{k}^*)]}{\E[\min\{X_{k,n},t_{k}^*\}]}.
    \label{eqn:dppr-2}
\end{equation}

The classical drift-based stochastic optimization techniques either assume the knowledge of the first-order moments in $\Psi_n(k,G_n)$, or they observe the outcomes for the completion of task $n$ prior to the decision. However, in online learning, since we have no prior knowledge of the mean values $\E[R_{m,n}(t_{m}^*)]$ and $\E[\min\{X_{k,n},t_{k}^*\}]$, we define the empirical reward-per-processing-time as follows:
\begin{equation}
    \widehat{r}_{k,n}(t) = \frac{\sum_{i=1}^{n-\tau}R_{k,i}(t)}{\sum_{i=1}^{n-\tau}\min\{X_{k,i},t\}}.
\end{equation}
where $n-\tau$ is the number of samples available. Similarly, let \begin{equation}
    {r}_{k}(t) = \frac{\E[R_{k,i}(t)]}{\E[\min\{X_{k,i},t\}]}.
\end{equation}
The deadline is chosen so as to maximize the reward per processing time: $$\widehat{r}_{k,n} = \max_{t\in\mathbb{T}}~\widehat{r}_{k,n}(t).$$ Let $\delta_k(t)=\max_{t^\prime}~r_k(t^\prime)-r_k(t)$ and $\delta(t) = \min_{(k,t):\delta_k(t)>0}~\delta_k(t)$. By using Lemma \ref{lemma:concentration}, it can be shown that $T_n=t_{G_n}^*$ with probability at least $1-e^{-n\Omega\big(\delta^2(t)\big)}$, i.e., the optimal deadline for the chosen group $G_n$ is selected with high probability. With this deadline-selection policy, the empirical drift-plus-penalty ratio (e-DPPR) is defined as follows: 
\begin{equation}
    \widehat{\Psi}_n(k, Q_n) = \psi_n(\gamma_n,Q_n) -Q_{k,n}\widehat{r}_{k,n}.
    \label{eqn:e-dppr}
\end{equation}
The {\tt OLUM} Algorithm as defined in Definition \ref{def:olum} is based on minimizing the e-DPPR in \eqref{eqn:e-dppr}. The auxiliary variables in the {\tt OLUM} Algorithm is chosen to maximize $\psi_n(\gamma_n,Q_n)$ over $\gamma_n$, and the group decision is independent of the choice of the auxiliary variables given $Q_n$.

The following proposition quantifies the approximation error for using the e-DPPR in the decision-making as a surrogate for the DPPR in the optimization.
\begin{proposition}
    For any given $\epsilon \in(0,\mu_*)$, we have the following inequality for the DPPR under the {\tt OLUM} Algorithm:
    \begin{equation}
         \Psi_n(G_n, Q_n)\leq \min_{k\in[K]}~\Psi_n(k, Q_n)+\frac{2\epsilon(1+r^*)}{\mu_*-\epsilon}\sum_kQ_{k,n}+h(Q_n)O(n),
    \end{equation}
    where $\E[h(Q_n)] = c_1e^{-c_2n\epsilon^2}$ for some constants $c_1,c_2>0$ and $$r^* = \max_{(k,t)}~\frac{\E[R_{k,n}(t)]}{\E[\min\{X_{k,n}t\}]}.$$
\label{prop:pac-bound}
\end{proposition}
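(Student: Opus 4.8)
The plan is to exploit the fact that the auxiliary-variable term $\psi_n(\gamma_n,Q_n)$ is common to $\Psi_n(k,Q_n)$, to $\widehat\Psi_n(k,Q_n)$, and to every group index $k$, so that it cancels in all the comparisons below. By \eqref{eqn:dppr-2} and \eqref{eqn:e-dppr}, minimizing the true DPPR over $k$ is equivalent to maximizing $Q_{k,n}r_k$ (where $r_k:=\max_t r_k(t)=r_k(t_k^*)$ is the reward-per-processing-time appearing in \eqref{eqn:dppr-2}), whereas the {\tt OLUM} rule (Definition \ref{def:olum}) selects $G_n\in\arg\max_k Q_{k,n}\widehat r_{k,n}$, i.e. it minimizes the \emph{empirical} DPPR. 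Hence the whole proposition reduces to controlling $Q_{k^*,n}r_{k^*}-Q_{G_n,n}r_{G_n}$, where $k^*\in\arg\max_k Q_{k,n}r_k$ indexes the minimizer of the true DPPR; the only source of error is the substitution of $\widehat r_{k,n}$ for $r_k$.

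First I would set up the concentration event. Since $\mathbb{T}$ and $[K]$ are finite and $\min\{X_{k,i},t\},R_{k,i}(t)$ are bounded (hence sub-Gaussian), Lemma \ref{lemma:concentration} applied with $r=r^*$ and $\mu=\mu_*-\epsilon$ gives, for each fixed $(k,t)$, the deviation bound $|\widehat r_{k,n}(t)-r_k(t)|\le \frac{\epsilon(1+r^*)}{\mu_*-\epsilon}$ with probability at least $1-4e^{-cn\epsilon^2}$ for a constant $c>0$. A union bound over the finitely many pairs $(k,t)$ defines a good event $E_n$ of probability at least $1-c_1 e^{-c_2 n\epsilon^2}$ on which this bound holds simultaneously for all $(k,t)$. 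Since $\widehat r_{k,n}=\max_t\widehat r_{k,n}(t)$ and $r_k=\max_t r_k(t)$, the elementary inequality $|\max_t a_t-\max_t b_t|\le\max_t|a_t-b_t|$ promotes this to $|\widehat r_{k,n}-r_k|\le\Delta$ for all $k$ on $E_n$, with $\Delta=\frac{\epsilon(1+r^*)}{\mu_*-\epsilon}$; this simultaneously dispatches the deadline selection.

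On the event $E_n$ I would chain the optimality of $G_n$ with the two-sided deviation bound:
\begin{equation}
Q_{G_n,n}r_{G_n}\ge Q_{G_n,n}\widehat r_{G_n,n}-\Delta Q_{G_n,n}\ge Q_{k^*,n}\widehat r_{k^*,n}-\Delta Q_{G_n,n}\ge Q_{k^*,n}r_{k^*}-\Delta\big(Q_{k^*,n}+Q_{G_n,n}\big),\nonumber
\end{equation}
where the middle inequality is exactly $G_n\in\arg\max_k Q_{k,n}\widehat r_{k,n}$. Bounding $Q_{k^*,n}+Q_{G_n,n}\le 2\sum_k Q_{k,n}$ and adding back the common term $\psi_n(\gamma_n,Q_n)$ yields $\Psi_n(G_n,Q_n)\le\min_k\Psi_n(k,Q_n)+\frac{2\epsilon(1+r^*)}{\mu_*-\epsilon}\sum_kQ_{k,n}$ on $E_n$. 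On $E_n^c$, I would use the crude deterministic bound $\Psi_n(G_n,Q_n)-\min_k\Psi_n(k,Q_n)\le r^*\sum_kQ_{k,n}=O(n)$, valid because the per-step increments in \eqref{eqn:q-update} are bounded, so $\sum_kQ_{k,n}=O(n)$ surely. Setting $h(Q_n)=\mathbb{I}\{E_n^c\}$, the two regimes combine into the claimed inequality, and $\E[h(Q_n)]=\mathbb{P}(E_n^c)\le c_1 e^{-c_2 n\epsilon^2}$.

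The main obstacle I anticipate is not the algebraic chaining, which is routine once $\psi_n$ is cancelled, but the bookkeeping on the bad event and the consistency of the concentration parameters: one must invoke Lemma \ref{lemma:concentration} with $r=r^*$, $\mu=\mu_*-\epsilon$ so that the denominator matches the claimed $\mu_*-\epsilon$, and must verify that the $O(n)$ crude bound on $E_n^c$ is genuinely absorbed by the exponentially small $\mathbb{P}(E_n^c)$ so that $\E[h(Q_n)]\,O(n)$ remains summable when this proposition later feeds the drift analysis of Step 1. The deadline-max concentration is the other delicate point, but it is handled cleanly by the $\max$-of-differences inequality together with the finiteness of $\mathbb{T}$.
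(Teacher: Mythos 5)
Your proposal is correct and rests on the same pillars as the paper's proof: Lemma \ref{lemma:concentration}, a union bound over the finitely many pairs $(k,t)$, and a two-sided comparison between the empirical and true DPPR of the selected group $G_n$ and of the true minimizer $k^*$, which is exactly where the factor $2$ in $\frac{2\epsilon(1+r^*)}{\mu_*-\epsilon}\sum_k Q_{k,n}$ comes from in both arguments. The differences are organizational rather than substantive, but worth recording. The paper phrases the comparison as a PAC bound on $\mathbb{P}(G_n=k\mid Q_n)$ for any group $k$ whose DPPR exceeds the minimum by $\delta=\epsilon\cdot O(\sum_k Q_{k,n})$, splitting $\delta/2$--$\delta/2$ between the two deviation events; your chaining inequality on the good event $E_n$ is precisely the contrapositive of that bound, and your explicit choice $h(Q_n)=\mathbb{I}\{E_n^c\}$ together with the crude bound $\Psi_n(G_n,Q_n)-\min_k\Psi_n(k,Q_n)\le r^*\sum_k Q_{k,n}=O(n)$ on the bad event supplies the $h(Q_n)O(n)$ term, which the paper's sketch leaves implicit. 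The one genuinely different (and cleaner) step is the deadline: the paper argues separately, via the gap $\delta(t)$, that $T_n=t^*_{G_n}$ with probability $1-e^{-n\Omega(\delta^2(t))}$, whereas your inequality $\big|\max_t \widehat{r}_{k,n}(t)-\max_t r_k(t)\big|\le\max_t\big|\widehat{r}_{k,n}(t)-r_k(t)\big|$ shows $\widehat{r}_{k,n}$ concentrates around $r_k(t_k^*)$ with no gap assumption --- which indeed suffices here, since $\Psi_n$ is defined at the optimal deadlines $t_k^*$ and so only the group choice enters the stated inequality. Two trivialities to tidy up: Lemma \ref{lemma:concentration} requires $r$ strictly larger than the reward rate, so take $r$ slightly above $r^*$ (the bound is monotone in $r$); and the empirical estimates use $n-\tau$ samples, so the exponent is really $(n-\tau)\epsilon^2$, absorbed into $c_1,c_2$ for constant $\tau$.
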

The proof of Proposition \ref{prop:pac-bound} relies on the concentration result presented in Lemma \ref{lemma:concentration} and a PAC-type bound: let $k$ be a group such that $\Psi_n(k,Q_n) > \min_j\Psi_n(j,Q_n)+\delta$ for any $\delta > 0$ given $Q_n$. Then, 
\begin{multline*}
    \mathbb{P}\big(G_n=k\big|Q_n\big) \leq \mathbb{P}\big(\big|\widehat{\Psi}_n(k,Q_n)-{\Psi}_n(k,Q_n)\big|>\delta/2\big|Q_n\big)\\+\mathbb{P}\big(\big|\widehat{\Psi}_n(k_n,Q_n)-{\Psi}_n(k_n,Q_n)\big|>\delta/2\big|Q_n\big),
\end{multline*}
where $k_n=\arg\min_j\Psi_n(j,Q_n)$. Then, a straightforward application of Lemma \ref{lemma:concentration} and union bound (over suboptimal groups) with $\delta = \epsilon\cdot O(\sum_k Q_{k,n})$ for $\epsilon>0$ yield the result.

We have the following lemma, which will be key in the analysis of the learning algorithm.
\begin{lemma}[\cite{neely2010stochastic}]
    Let $L(q)=\frac{1}{2}\sum_{m=1}^Kq_m^2$ be the quadratic Lyapunov function, and $$\Delta(Q_n) = \E[L(Q_{n+1})-L(Q_n)|Q_n],$$ be the Lyapunov drift. Then, we have the following bound on the Lyapunov drift for the problem \eqref{eqn:equivalent-opt}:
    \begin{equation}
        \Delta(Q_n) \leq D + \E[\sum_{k\in[K]}Q_{k,n}Y_{G_n,k,n}\big|Q_n],
    \end{equation}
    for some constant $D>0$ under the {\tt OLUM} Algorithm.
    \label{lemma:drift}
\end{lemma}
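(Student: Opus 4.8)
The plan is to run the textbook quadratic-Lyapunov drift argument for reflected recursions, the only nonstandard piece being the verification that the residual second-order term is a genuine constant. First I would rewrite the update \eqref{eqn:q-update} in the compact form $Q_{k,n+1}=(Q_{k,n}+Y_{G_n,k,n})^+$, where $Y_{G_n,k,n}=\gamma_{k,n}\min\{X_{G_n,n},T_n\}-R_{k,n}(T_n)\mathbb{I}\{G_n=k\}$ is exactly the increment appearing in Lemma \ref{lemma:jensen}. The pointwise fact I would invoke is that $((x)^+)^2\leq x^2$ for every real $x$, since the positive-part operation leaves a nonnegative argument unchanged and sends a negative one to zero.

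Applying this with $x=Q_{k,n}+Y_{G_n,k,n}$ and expanding the square gives the per-coordinate bound
\begin{equation*}
Q_{k,n+1}^2-Q_{k,n}^2\leq 2Q_{k,n}Y_{G_n,k,n}+Y_{G_n,k,n}^2.
\end{equation*}
Summing over $k\in[K]$, dividing by two, and recalling $L(q)=\tfrac12\sum_m q_m^2$ yields
\begin{equation*}
L(Q_{n+1})-L(Q_n)\leq \sum_{k\in[K]}Q_{k,n}Y_{G_n,k,n}+\tfrac12\sum_{k\in[K]}Y_{G_n,k,n}^2.
\end{equation*}
Taking the conditional expectation given $Q_n$ produces exactly the claimed inequality, with the correction term identified as $D:=\tfrac12\E\big[\sum_{k}Y_{G_n,k,n}^2\,\big|\,Q_n\big]$, so everything reduces to showing that this term is bounded by a constant independent of $n$ and of $Q_n$.

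For that bound I would use that each ingredient of $Y_{G_n,k,n}$ lives in a bounded range: the truncated completion time satisfies $\min\{X_{G_n,n},T_n\}\leq \max_\ell t_\ell=:t_{\max}<\infty$ because $\mathbb{T}$ is finite; the reward obeys $R_{k,n}(T_n)\leq \max_\ell R_{max}(t_\ell)=:C_R<\infty$ by the standing boundedness assumption; and the auxiliary variable $\gamma_{k,n}=(U_k^\prime)^{-1}(Q_{k,n}/V)$ is confined to the bounded set of attainable reward rates, say $0\leq\gamma_{k,n}\leq\gamma_{\max}<\infty$. Together these give $|Y_{G_n,k,n}|\leq t_{\max}\gamma_{\max}+C_R$ almost surely, so $D\leq \tfrac{K}{2}(t_{\max}\gamma_{\max}+C_R)^2$ is finite and independent of $Q_n$, as required. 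The step I expect to be the main obstacle is this last one: unlike the completion time and the reward, the auxiliary variable is a function of the possibly large dual variable $Q_{k,n}$, so one must justify that $\gamma_{k,n}$ stays uniformly bounded, which is precisely what forces the auxiliary optimization of $\psi_n(\gamma_n,Q_n)$ to be performed over a bounded feasible region of reward rates. With that restriction in place the second-order term collapses to the constant $D$, whereas without it the drift would carry an unbounded penalty and the argument would fail.
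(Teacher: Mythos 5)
Your argument is the standard quadratic-drift computation, and it is the right one: the paper offers no proof of this lemma at all, importing it from \cite{neely2010stochastic}, and the proof there is exactly your chain --- $\big((x)^+\big)^2\le x^2$, expand the square, sum over $k$, take conditional expectations, and absorb the second-order term $\tfrac12\E\big[\sum_{k}Y_{G_n,k,n}^2\,\big|\,Q_n\big]$ into the constant $D$.

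The one point that needs correcting is your diagnosis of the boundedness of the auxiliary variables, which you rightly identify as the crux. You worry about $Q_{k,n}$ being \emph{large}, but concavity of $U_k$ makes $U_k^\prime$ decreasing, hence $(U_k^\prime)^{-1}$ is decreasing as well: $\gamma_{k,n}=(U_k^\prime)^{-1}(Q_{k,n}/V)$ shrinks as $Q_{k,n}$ grows, and the actual failure mode is $Q_{k,n}\to 0^+$. For the $\alpha$-fair utilities of Definition \ref{def:alphaFair} one has $\gamma_{k,n}=(w_kV/Q_{k,n})^{1/\alpha}$, which is unbounded near zero, and nothing in the model prevents the projection in \eqref{eqn:q-update} from sending $Q_{k,n}$ to zero after a single large reward. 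So the uniform bound $\gamma_{k,n}\le\gamma_{\max}$ is a genuine extra ingredient rather than something ``confined to the set of attainable reward rates'' automatically: Definition \ref{def:olum} as written does not clip $\gamma_{k,n}$, whereas the cited framework performs the auxiliary maximization of $\psi_n(\gamma_n,Q_n)$ over a compact interval, yielding $\gamma_{k,n}=\min\{(U_k^\prime)^{-1}(Q_{k,n}/V),\gamma_{\max}\}$, and only under that convention is your bound $D\le\tfrac{K}{2}(t_{\max}\gamma_{\max}+C_R)^2$ valid uniformly in $Q_n$, as the lemma's ``constant $D$'' requires. Your instinct that the lemma fails without such a restriction is correct; only the stated direction of the failure is reversed.
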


From Proposition \ref{prop:pac-bound} and Lemma \ref{lemma:drift} with $\epsilon = \epsilon_n = \frac{2(1+r^*)}{\mu_*}\sqrt{\frac{\beta\log(n)}{n}}$ for $\beta > 2$, we have the following result:
\begin{multline}
    \Delta(Q_n) - V\E[\min\{X_{G_n,n},T_n\}\sum_{k}U_k(\gamma_{k,n})|Q_n] \leq D \\+ \E[\min\{X_{G_n,n},T_n\}|Q_n]\Big(-VU^*+\epsilon_n\sum_kQ_{k,n}+\E[h(Q_n)|Q_n]O(K\cdot n)\Big).
    \label{eqn:drift-bound}
\end{multline}
where $D > 0$ is a constant, and the RHS holds since there exists an optimal stationary randomized policy for \eqref{eqn:equivalent-opt} which satisfies: $$\min_k\Psi(k,Q_n) \leq \Psi(\tilde{G}_n, Q_n) = -VU^*.$$ Following the methodology in \cite{neely2010dynamic} by taking the expectation in \eqref{eqn:drift-bound}, we have:
\begin{multline}
    \E[L(Q_{n+1})-L(Q_n)]-V\E[\min\{X_{G_n,n},T_n\}\sum_{k}U_k(\gamma_{k,n})] \leq B-VU^*\E[\min\{X_{G_n,n},T_n\}]\\+\epsilon_n\max_{k\in[K]}\E[X_{k,n}]\sum_{k\in[K]}\E[Q_{k,n}]+O(K)n^{1-\beta},
    \label{eqn:drift-bound-useful}
\end{multline}
Summing the above over $n=0,1,\ldots,N-1$, dividing by $N$, and rearranging terms, we have the following inequality:
\begin{equation}
    \frac{\E[\sum_{n=1}^NZ_{\pi_n,n}]}{\E[\sum_{n=1}^N \widetilde{X}_{\pi_n,n}]} \geq U^*-\frac{2(1+r^*)}{\mu_*}O\Big(\sqrt{\frac{\beta\log(N)}{N}}\Big)+\frac{D/\mu_*+O(N^{\beta-2})}{V}+\frac{\E[L(Q_0)]}{V\mu_*N}.
\end{equation}

The second question we had was how much the constraint in \eqref{eqn:equivalent-opt} is violated. From the update of the dual variables \eqref{eqn:q-update}, we have the following:
\begin{equation}
    Q_{k,n+1} \geq Q_{k,n} + Y_{\pi_n,k,n},
\end{equation}
Summing the above over all $n=0,1,\ldots,N-1$, we have: $$Q_{k,N} \geq Q_{k,0} + \sum_{n=1}^NY_{G_n,k,n}.$$ Thus, we have:
\begin{equation}
    \frac{\E[Q_{k,N}]}{N\mu_{*}} \geq \frac{\E[\sum_{n=1}^NY_{\pi_n,k,n}]}{\E[\sum_{n=1}^N\min\{X_{G_n,n},T_n\}]}.
\end{equation}

By Lemma \ref{lemma:q-lengths}, the following inequality holds: $$\frac{\E[Q_{k,N}]}{N} \leq O\Big(\frac{V}{N}\Big).$$ Hence, by choosing $V=\Theta(\sqrt{N/\log(N)})$, we show that the objective is achieved with $O(\sqrt{\log(N)/N})$ gap, and the constraint is satisfied at a rate $O(1/\sqrt{N\log(N)})$.

\textbf{Step 2.} In this step, we will show that the decision-making process continues for $N^\pi(B) = \Theta(B)$ stages with high probability, which will conclude the proof. 

For any $B>0$, let $n_0(B) = \lceil 2B/\mu_{min}\rceil$. Then, under any causal policy $\pi$, we have the following bound:
\begin{align}
    \nonumber {\tt REG}_{\Pi_S}^\pi(B) &= U^*-\frac{\E[\sum_{n=1}^{N^{\pi}(B)}Z_{\pi_n,n}]}{B},\\
    &\leq U^*-\frac{\E[\sum_{n=1}^{N^{\pi}(B)}Z_{\pi_n,n}]}{\E[\sum_{n=1}^{N^{\pi}(B)} \widetilde{X}_{\pi_n,n}]}, \label{eqn:regret-renewal-2} \\
    &\leq \frac{\mu^*\cdot n_0(B)}{B}\Big(U^*-\frac{\E[\sum_{n=1}^{n_0(B)}Z_{\pi_n,n}]}{\E[\sum_{n=1}^{n_0(B)} \widetilde{X}_{\pi_n,n}]} + o(1)\Big), \label{eqn:regret-renewal-3}
\end{align}
where $U^* = {\tt OPT}_{\Pi_S}(B)+O(1/B)$ is the optimal utility in Lemma \ref{lemma:jensen}, $\mu^*=\max_k\E[X_{k,n}]$, and \eqref{eqn:regret-renewal-2} holds since $\sum_{n=1}^{N^{\pi}(B)} X_{G_n,n} \geq B$ by definition. In order to prove \eqref{eqn:regret-renewal-3}, first note that 
\begin{align}\nonumber \E\Big[\sum_{n=1}^{N^\pi(B)}\big(U^*\widetilde{X}_{\pi_n,n}-Z_{\pi_n,n}\big)\Big] &= \E\Big[\sum_{n=1}^\infty\big(U^*\widetilde{X}_{\pi_n,n}-Z_{\pi_n,n}\big)\mathbb{I}\{N^\pi(B) > n\}\Big],\\
&\leq \E\Big[\sum_{n=1}^{n_0(B)}\big(U^*\widetilde{X}_{\pi_n,n}-Z_{\pi_n,n}\big)\Big]+U^*\hskip -0.25cm \sum_{n>n_0(B)}\mathbb{P}(N^\pi(B) > n) \label{eqn:regret-renewal-4}
\end{align}
Since $\{N^\pi(B)>n\}\subset\{\sum_{i=1}^n\widetilde{X}_{\pi_i,i} < B\}$ by definition and $\E[\widetilde{X}_{\pi_i,i}|\mathcal{H}_i] \geq \mu_{*}>0$ for all $i$, we have: $$\mathbb{P}(N^\pi(B)>n) = \mathbb{P}(\sum_{i=1}^n\widetilde{X}_{\pi_i,i} < B) \leq e^{-n\Omega(1)},$$ for all $n > n_0(B)$ by Azuma-Hoeffding inequality \cite{wainwright2019high}, which implies that $n_0(B)$ is a high-probability upper bound for $N^\pi(B)$ under any causal policy $\pi$. In other words, the decision-making process continues for at most $n_0(B)$ turns with high probability since each action depletes a positive amount from the time budget $B$. Consequently, we have $$\sum_{n>n_0(B)}\mathbb{P}(N^\pi(B) > n) \leq e^{-\Omega(B)}=o(1).$$ Using this result and rearranging the terms in \eqref{eqn:regret-renewal-4}, we obtain the inequality in \eqref{eqn:regret-renewal-3}. Furthermore, the constraints are satisfied at rate $O(V/B)$ for all groups. Therefore, by using the result of Step 1 with $N=n_0(B)$ and noting that $n_0(B)/B = \Theta(1)$, we conclude that ${\tt REG}_{\Pi_S}^\pi(B)=O(\sqrt{\log(B)/B})$.




\end{document}